\newcommand{\stdv}[1]{\scriptsize$\pm$#1}
\newcommand{\cD}{\mathcal{D}}
\theoremstyle{plain}
\newtheorem{theorem}{Theorem}
\newtheorem{lemma}{Lemma}
\theoremstyle{definition}
\theoremstyle{remark}
\newtheorem{example}{Example}
\newcommand{\E}{\mathop{\mathbb{E}}}
\DeclareMathOperator*{\softmax}{softmax}
\renewcommand{\emptyset}{\phi}
\icmltitlerunning{Provable Contrastive Continual Learning}
\begin{document}

\twocolumn[
\icmltitle{Provable Contrastive Continual Learning}

\icmlsetsymbol{equal}{*}

\begin{icmlauthorlist}
\icmlauthor{Yichen Wen}{equal,sjtu,sch1}
\icmlauthor{Zhiquan Tan}{equal,thu}
\icmlauthor{Kaipeng Zheng}{sjtu}
\icmlauthor{Chuanlong Xie}{sch1}
\icmlauthor{Weiran Huang \textsuperscript{\dag}}{sjtu,ailab}
\end{icmlauthorlist}

\icmlaffiliation{sjtu}{MIFA Lab, Qing Yuan Research Institute, SEIEE, Shanghai Jiao Tong University}
\icmlaffiliation{sch1}{Beijing Normal University}
\icmlaffiliation{thu}{Department of Mathematical Sciences, Tsinghua University}
\icmlaffiliation{ailab}{Shanghai AI Laboratory}

\icmlcorrespondingauthor{Weiran Huang}{weiran.huang@outlook.com}

\icmlkeywords{Machine Learning, ICML}

\vskip 0.3in
]

\printAffiliationsAndNotice{\icmlEqualContribution} %

\begin{abstract}
Continual learning requires learning incremental tasks with dynamic data distributions. So far, it has been observed that employing a combination of contrastive loss and distillation loss for training in continual learning yields strong performance. To the best of our knowledge, however, this contrastive continual learning framework lacks convincing theoretical explanations. In this work, we fill this gap by establishing theoretical performance guarantees, which reveal how the performance of the model is bounded by training losses of previous tasks in the contrastive continual learning framework. Our theoretical explanations further support the idea that pre-training can benefit continual learning. Inspired by our theoretical analysis of these guarantees, we propose a novel contrastive continual learning algorithm called CILA, which uses adaptive distillation coefficients for different tasks. These distillation coefficients are easily computed by the ratio between average distillation losses and average contrastive losses from previous tasks. Our method shows great improvement on standard benchmarks and achieves new state-of-the-art performance.
\end{abstract}

\section{Introduction}

Incrementally learning a sequence of tasks with dynamic data distributions is a typical setting for continual learning. We call the learned neural networks ``continual learners''. The main challenge for continual learners is to obtain a suitable trade-off between learning plasticity and memory stability. Specifically, excessive focus on learning plasticity of new tasks often leads to greatly reduced performance on old tasks \cite{McClelland1995WhyTA}, which is known as catastrophic forgetting. 

To address the challenge, the literature on continual learning has proposed various approaches. Representation-based approaches take advantage of representations. As one of these approaches, self-supervised learning with contrastive loss has demonstrated notable efficacy in obtaining robust representations against catastrophic forgetting in continual learning \cite{gallardo2021selfsupervised,fini2022selfsupervised}. For these methods based on contrastive loss, the training of representations is often decoupled with the training of the classifier, unlike methods based on cross-entropy. Specifically, contrastively trained representations suffer less catastrophic forgetting than ones trained by cross-entropy loss \cite{cha2021co2l}. Replay-based approaches use buffers to restore a part of previous data, and train networks using data from a combination of the current task and the buffer \cite{lopezpaz2017gradient}. Naturally, these methods are combined with knowledge distillation strategies to prevent the degradation of information in the network over time \cite{rebuffi2017icarl}. Regularization-based approaches introduce regularization terms to the target loss for continual learning to reach a balance between learning new tasks and preserving information from old tasks \cite{Kirkpatrick_2017}. Two main sub-directions within regularization-based approaches include weight regularization \cite{Ritter2018OnlineSL} and function regularization \cite{Li2016LearningWF}.

To achieve effective continual learning, a natural idea is to combine the three approaches above, and this idea leads to a new framework called contrastive continual learning \cite{cha2021co2l}, as illustrated in Figure~\ref{fig:1}. This framework focuses on using contrastively learned representations to learn new tasks and utilizing knowledge distillation to preserve information from past tasks, with the help of memory buffer and function regularization. The target loss of this framework contains a contrastive loss and a distillation loss with a distillation coefficient $\lambda$. The training data will be selected from the combination of the current data and buffered data. Empirically, this framework has been observed to be efficient, showing promising performance in continual learning \cite{cha2021co2l}. Despite the growing attention directed towards this framework, limited theoretical works have been proposed to explain its superior performance. 

In this paper, we try to address the theoretical problem of why this framework is efficient. Therefore, we consider the losses provided in \cite{cha2021co2l}. We have found a clear relationship between the contrastive losses of two consecutive models in continual learning. Inspired by this, we propose theoretical performance guarantees that reveal how the population test loss, i.e., the total performance of the final model on all seen tasks, is bounded by the series of training losses for the contrastive continual learning framework. Based on our theory, we propose a new and efficient contrastive continual learning algorithm called CILA, which uses distillation coefficients adapted to different tasks. Moreover, CILA consistently outperforms all baselines in different scenarios, datasets, and buffer sizes, e.g., about $1.77 \% $ improvement compared with the previous state-of-the-art method Co$^2$L \cite{cha2021co2l} on Seq-CIFAR-10 with a buffer of 500 samples for Class-IL scenario. 

Overall, our contributions are listed as follows. (1) We provide theoretical performance guarantees for the contrastive continual learning scheme. We identify that the overall performance of the final learned model on all seen tasks can be bounded by a function of the series of training losses with the distillation coefficient; (2) 
We propose an efficient algorithm CILA, which uses adaptive distillation coefficient $\lambda_t$ (replace $\lambda$ with $\lambda_t$ in Figure~\ref{fig:1}) for each task $t$; (3) 
We conduct extensive experiments to validate the efficacy of our algorithm, and the results strongly support our theory. Our method can inspire future works in contrastive continual learning.

\begin{figure*}
    \centering
    \includegraphics[width=\linewidth]{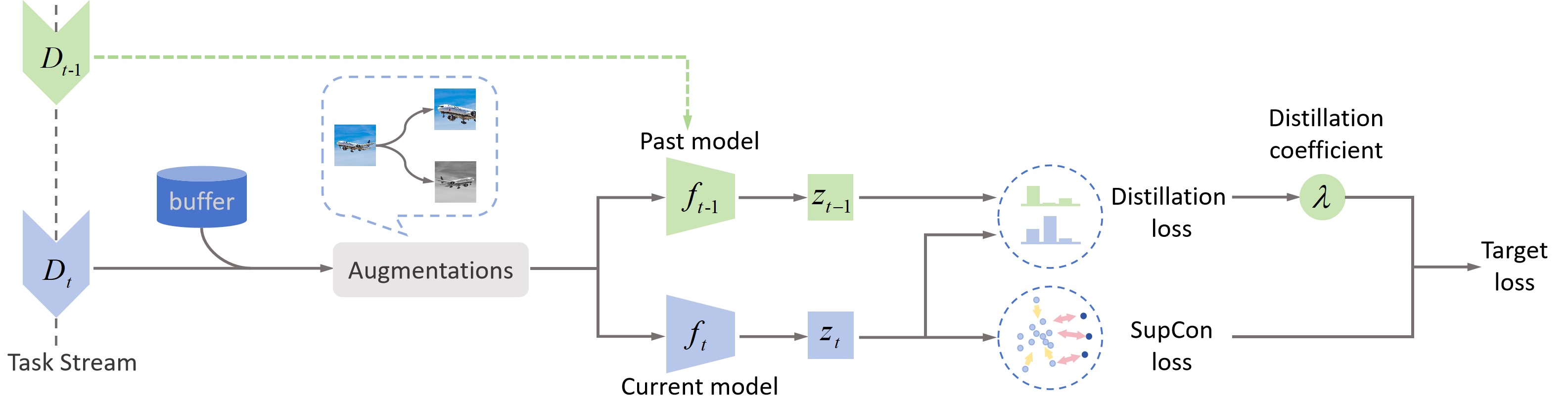}
    \caption{An illustration of contrastive continual learning framework. At the end of the previous task, we restore the previous model and values of losses. For the current task, augmentations are applied to both the buffered and the current data. Then the augmented data is passed through the current model and the previous frozen model to obtain representations. The target loss of contrastive continual learning is a weighted sum of contrastive loss and distillation loss with a distillation coefficient $\lambda$.}
    \label{fig:1}
\end{figure*}
\section{Related Work}

\textbf{Continual learning. }Continual learning is also referred to as incremental learning, which learns incremental tasks effectively \cite{wang2023comprehensive}.  The literature in this field mainly focuses on several streams including weight and function regularization \cite{Jung2020ContinualLW}, memory replay \cite{Prabhu2020GDumbAS}, sparse representations \cite{Javed2019MetaLearningRF}, parameter isolation \cite{Gurbuz2022NISPANS}, and dynamic architecture \cite{Ramesh2021ModelZA}.

As one of these effective continual learning methods, replay-based methods have demonstrated superior performance in terms of both learning plasticity and memory stability \cite{riemer2019learning}. Replay-based continual learning methods are developed from the idea of Experience Replay \cite{buzzega2020dark}, which typically stores past training samples in a fixed-size buffer. Currently, these replay-based methods are divided into two main streams, including experience replay and generative replay. Experience replay-based methods focus on the construction of memory buffer \cite{riemer2019learning,tiwari2022gcr} and storage efficiency \cite{caccia2019online,bang2021rainbow}. Generative replay-based methods concentrate on generative adversarial networks (GANs) to generate fine-grained data \cite{cong2020gan,ayub2021eec}. 

Representation-based methods for continual learning are also observed to be competitive. Recent works in continual learning take advantage of self-supervised learning to obtain robust representations, showing great performance on downstream tasks  \cite{pham2021dualnet}. Large-scale pre-training also contributes to improving transferable and robust representations for downstream continual learning \cite{gallardo2021selfsupervised,Ramasesh2022EffectOS}. 

Regularization-based methods mainly focus on weight and function regularization. Weight regularization methods add penalties to the loss function, typically the penalty is a quadratic one \cite{Kirkpatrick_2017,liu2018rotate}, and function regularization methods implement knowledge distillation on the intermediate or final output of the prediction function \cite{li2017learning,lee2019overcoming}. For function regularization methods, the teacher model is the frozen past model, and the student model is the current model.
Besides, there are some theory works analyzing regularization-based methods.
\citet{evron2022catastrophic} study the minimum norm estimator in CL under an over-parameterized and noise-free setup. \citet{li2023fixed} give a fixed design analysis of continual ridge regression for two-task linear regression.
\citet{zhao2024continual} consider a family of generalized $\ell_2$-regularization estimators and give some optimality analysis.

\textbf{Contrastive learning. }Contrastive learning aims to learn representations that attract different views of the same image while repelling views from different images \cite{tian2020contrastive}. Contrastive methods have been widely used in self-supervised learning and pre-training, showing superior performance on downstream tasks. The contrastive loss was first proposed in \cite{bromley1993signature} and then more formally defined in \cite{Chopra2005LearningAS} and \cite{Hadsell2006DimensionalityRB}. Later some theoretical analyses on the contrastive learning framework were provided in \cite{arora2019theoretical,huang2023towards, tan2023contrastive, tan2023otmatch, tan2023information, zhang2023kernel}. There are various target losses in contrastive learning, for example, InfoNCE loss \cite{oord2019representation} is a widely adopted and efficient one. Notably, methods in this field have reached or even outperformed supervised learning methods \cite{Khosla2020SupervisedCL} for image classification. Representative approaches include SimCLR \cite{chen2020simple}, MoCo v1\&v2 \cite{he2020momentum,chen2020improved}. In this work, we employ the contrastive loss provided in \cite{Khosla2020SupervisedCL} for the contrastive continual learning framework to show some theoretical insights.

\textbf{Knowledge distillation. }In various scenarios of continual learning, knowledge distillation is used to preserve information from the old model to the current model, contributing to mitigate catastrophic forgetting. Typically, knowledge distillation learns a small student model from a large teacher model with limited resources \cite{Gou_2021}. Various kinds of knowledge can be transferred by knowledge distillation, including response-based knowledge which is the neural response of the last output layer of the teacher model \cite{hinton2015distilling}, feature-based knowledge like feature maps \cite{zagoruyko2016paying} and relation-based knowledge referring to relationships between different layers or between different samples \cite{passalis2020heterogeneous}. Learning schemes of knowledge distillation include three streams, they are online distillation, offline distillation, and self-distillation. Among them, self-distillation considers the same structure between the teacher model and the student model \cite{zhang2020selfdistillation,mobahi2020selfdistillation}.

\section{Problem Setup}

We are given a sequence of $T$ supervised tasks,
with each task presented sequentially, one after the other.
For each task $t$, the training samples are assumed to be drawn from an unknown data distribution $\cD_t$. 
The model can be updated after seeing each task.
The goal of continual learning is to train a model $f$ that performs well over all seen tasks.

Supervised contrastive loss \cite{Khosla2020SupervisedCL} has shown its superiority over the cross-entropy loss in the standard supervised classification,
and it is then introduced into the continual learning by Co$^2$L \cite{cha2021co2l}.
Specifically, contrastive continual learning updates the model at each time step $t$ according to two losses, the contrastive loss and the distillation loss,
which measure the learning plasticity and memory stability, respectively.

\paragraph{Contrastive loss.}
For each task $t$, we use $\mu_t$ to denote the class distribution of task $\cD_t$,
and $\cD_c$ to denote the data distribution associated with each class $c$. In this paper, we consider a 
contrastive loss involving two similar samples $x,x^+$ i.i.d.\ drawn from the same class distribution $\cD_c$.
Meanwhile, there are several negative samples randomly picked from the whole data distribution $\cD_t$.
For simplicity, we only consider the case of one negative sample here, the case of multiple negative samples can be found in Appendix~\ref{appendixD} and~\ref{appendixE}.
Therefore, the contrastive loss can be formulated as
\begin{align*}
L_\text{con}(f;\cD_{t})= \E_{\substack{c^+ \sim \mu_t\\ c^- \sim \mu_t } }
\E_{\substack{x,x^+ \sim \cD_{c^+} \\
    x^- \sim \cD_{{c^-}}}} 
\ell \left[\medmath{f(x)^\top (f(x^+)\!-\!f(x^-))}\right]\!,
\end{align*}
where function $\ell(v)$ is defined as $\log (1+\exp(-v))$ and embeddings are conventionally normalized, i.e., $\|f\|=1$.

\paragraph{Distillation loss.}
Continual learning focuses on retaining previously acquired information while simultaneously learning new knowledge. In the specific context of contrastive continual learning, the model achieves knowledge preservation by keeping the model's ability to differentiate between similar and dissimilar (negative) samples.
To do so, we first compute the similarity probability distribution as
${\boldsymbol{p}} (f;x,x^+,x^-)=\softmax(f(x)^{\top} f(x^+),f(x)^{\top} f(x^-))$,
and then regulate the
cross-entropy between the past similarity probability distribution and the current one (e.g., IRD loss in \cite{cha2021co2l}). 
Specifically, for task $t$, we denote the distribution of all seen data by $\cD_{1:{t-1}}:= \sum_{j=1}^{t-1} k_{tj} \cD_j$, where we allow different tasks have different weights $k_{tj}>0$ with  $\sum_{j=1}^{t-1} k_{tj}=1$.
Therefore, the distillation loss considered in this paper can be formulated as 
\begin{align*}
&L_\text{dis}(f_t;f_{t-1},\cD_{1:t-1})= \\
&\E_{\substack{c^+\!\sim \mu_{1:t-1}\\ c^-\!\sim \mu_{1:t-1} } }
\E_{\substack{x,x^+\!\sim \cD_{c^+}\\
    x^-\!\sim \cD_{c^-}}} 
[- \medmath{{\boldsymbol{p}}(f_{t-1};x,x^+\!,x^-) \cdot\log{\boldsymbol{p}}(f_t;x,x^+\!,x^-)}],
\end{align*}
where $\mu_{1:t-1}$ represents the class distribution of $\cD_{1:t-1}$.

The total training loss of $f_t$ on task $t\ge 2$ is
\begin{align*}
&L_\text{train}(f_t;f_{t-1},\cD_t,\cD_{1:{t-1}})\\
& =L_\text{con}(f_t;\cD_t)+\lambda\cdot L_\text{dis}(f_t;f_{t-1},\cD_{1:{t-1}}),
\end{align*}
where $\lambda$ is a hyper-parameter for balancing the two loss terms.
For the first task $t=1$, the training loss does not have the distillation term, i.e., $L_\text{train}(f_1;\cD_1)=L_\text{con}(f_1;\cD_{1})$. 

To evaluate the contrastive continual learning model, we use the total performance (test loss) of the final model $f_T$ on all seen tasks, which can be formulated as
\begin{align*}
L_\text{test}(f_T;\cD_1,\dots,\cD_T) := \sum_{t=1}^T  L_\text{con}(f_T;\cD_{{t}}).
\end{align*}
\section{Theoretical Analysis}

Contrastive continual learning has demonstrated strong performance in practice. 
The focus of this paper is to examine its performance guarantees theoretically.
In particular, our study aims to investigate the relationship between the test loss $L_\text{test}(f_T;\cD_1,\dots,\cD_T)$ and the series of training losses $L_\text{train}(f_1;\cD_1)$, $L_\text{train}(f_2;f_1,\cD_2,\cD_{1})$, \dots, $L_\text{train}(f_T;f_{T-1},\cD_T,\cD_{1:T-1})$.

According to the definition, despite the distillation loss terms, the training losses involve $\{L_\text{con}(f_t;\cD_t)\}_{t=1}^T$, while the test loss consists of $\{L_\text{con}(f_T;\cD_t)\}_{t=1}^T$. 
To bridge the test loss and the training losses, we first
provide the relationship between the contrastive losses of two consecutive models $f_t$ and $f_{t-1}$ in the following lemma.

\begin{restatable}{lemma}{lma}
\label{lemma1}
When $t\ge 2$, for any data distribution $\cD$, the contrastive losses of
current model $f_t$ and previous model $f_{t-1}$ can be connected via the distillation loss, i.e.,
\begin{align*}
L_\text{\rm con}(f_t;\cD) &\le \alpha L_\text{\rm con}(f_{t-1};\cD) + L_\text{\rm dis}(f_t;f_{t-1},\cD) + \beta,\\
L_\text{\rm con}(f_t;\cD) &\ge \alpha L_\text{\rm con}(f_{t-1};\cD) + L_\text{\rm dis}(f_t;f_{t-1},\cD) + \beta',
\end{align*}
where
$\alpha = \frac{2e^2}{1+e^2}$, 
$\beta = 2-\alpha+\alpha \log \frac{\alpha}{2}$, and
$\beta' = -\alpha \log(1+e^2) - \alpha$.
\end{restatable}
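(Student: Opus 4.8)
The plan is to prove both inequalities \emph{pointwise} in the sampled tuple and then take expectations, which is legitimate because $L_{\text{con}}(f_t;\cD)$, $L_{\text{con}}(f_{t-1};\cD)$ and $L_{\text{dis}}(f_t;f_{t-1},\cD)$ are all integrals against the \emph{same} joint sampling of $(c^+,c^-,x,x^+,x^-)$. Fix such a tuple and write $v_s := f_s(x)^\top\!\big(f_s(x^+)-f_s(x^-)\big)$ for $s\in\{t-1,t\}$, and set $q := {\boldsymbol{p}}(f_{t-1};x,x^+,x^-)$ and $p := {\boldsymbol{p}}(f_t;x,x^+,x^-)$. The first thing I would establish is the exact scalar identity
\begin{align*}
\ell(v_t) \;=\; \big[-q\cdot\log p\big] \;-\; q_2\,v_t,
\end{align*}
where $q_2$ denotes the second coordinate of $q$: the left-hand side is the contrastive integrand of $f_t$, the bracket is the distillation integrand, and the whole lemma then reduces to bounding the single residual term $-q_2 v_t$.

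To obtain this identity I would observe that the first softmax coordinate is a logistic function, $p_1 = 1/(1+e^{-v_t})$, so the supervised-contrastive integrand is exactly its negative log, $\ell(v_t)=\log(1+e^{-v_t})=-\log p_1$, and likewise $-\log q_1=\ell(v_{t-1})$. Expanding the cross-entropy as $-q\cdot\log p=-q_1\log p_1-q_2\log p_2$ and subtracting, the $\log p_1$ contributions collapse and leave $\ell(v_t)-(-q\cdot\log p)=q_2\log(p_2/p_1)=q_2\log e^{-v_t}=-q_2 v_t$, which is the claimed identity. Two elementary facts then pin down the constants: since $\|f\|=1$, Cauchy--Schwarz gives $v_t\in[-2,2]$; and $q_2=1/(1+e^{v_{t-1}})=1-e^{-\ell(v_{t-1})}$, so writing $L:=\ell(v_{t-1})$, which ranges over $\big[\log(1+e^{-2}),\,\log(1+e^{2})\big]$, we have $q_2=1-e^{-L}$.

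Because $q_2>0$ and $|v_t|\le 2$, the residual is sandwiched as $-2q_2\le -q_2 v_t\le 2q_2$, so both inequalities become one-variable statements in $L$. For the upper bound I would show $2q_2=g(L):=2(1-e^{-L})\le \alpha L+\beta$: the function $g$ is concave, hence lies below each of its tangent lines, and the tangent at $L_0=\log(1+e^{-2})$ has slope $g'(L_0)=2/(1+e^{-2})=\alpha$ and intercept $g(L_0)-\alpha L_0=(2-\alpha)+\alpha\log(\alpha/2)=\beta$. For the lower bound I would show $-2q_2=-2+2e^{-L}\ge \alpha L+\beta'$ by checking that $\phi(L):=-2+2e^{-L}-\alpha L-\beta'$ satisfies $\phi'(L)=-2e^{-L}-\alpha<0$, so $\phi$ is decreasing and its minimum over the admissible interval is attained at the right endpoint $L=\log(1+e^{2})$, where a direct substitution gives $\phi=0$; hence $\phi\ge 0$ throughout. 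Combining each scalar bound with the identity and taking expectations yields the two stated inequalities.

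I expect the only genuine obstacle to be \emph{discovering} the identity $\ell(v_t)=[-q\cdot\log p]-q_2 v_t$ — that is, recognizing that ${\boldsymbol{p}}$'s first coordinate is the logistic of $v$ and that the contrastive integrand is precisely $-\log$ of it, which is exactly what makes the distillation cross-entropy telescope against the two contrastive integrands. Once that is in place, the rest is the routine verification that the prescribed $\alpha,\beta,\beta'$ are the tangent-line (from concavity) and endpoint (from monotonicity) constants of the scalar map $2(1-e^{-L})$, together with the bound $|v_t|\le 2$ coming from the normalization $\|f\|=1$.
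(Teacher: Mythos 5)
Your proposal is correct and follows essentially the same route as the paper: the identity $\ell(v_t) = [-\boldsymbol{q}\cdot\log\boldsymbol{p}] - q_2 v_t$ is exactly the paper's decomposition of the distillation integrand, the bound $-2q_2 \le -q_2 v_t \le 2q_2$ via $|v_t|\le 2$ is the same, and your tangent-line and endpoint constants reproduce the paper's $\alpha,\beta,\beta'$. The only difference is cosmetic: you reparametrize by $L=\ell(v_{t-1})$ and explicitly verify (via concavity and monotonicity) the two scalar inequalities that the paper simply asserts for $h\in[-2,2]$.
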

The above lemma can be directly proved using the formulae for contrastive loss and distillation loss. A detailed proof is provided in the appendix due to the space limitation.

According to Lemma~\ref{lemma1}, when considering $\cD=\cD_{t}$ $(t \le T)$, a connection between $L_\text{\rm con}(f_T;\cD_t)$ and $L_\text{\rm con}(f_{T-1};\cD_t)$ can be established.
Similarly, a link between $L_\text{\rm con}(f_{T-1};\cD_t)$ and $L_\text{\rm con}(f_{T-2};\cD_t)$ can be drawn, and so on.
This approach allows us to build a bridge between $L_\text{\rm con}(f_T;\cD_t)$ and $L_\text{\rm con}(f_t;\cD_t)$ for any given $t$, which are the components of test loss and training losses, respectively.
Thus, with Lemma~\ref{lemma1}, we can now derive the relationship between the test loss $L_\text{test}(f_T;\cD_1,\dots,\cD_T)$ and the series of training losses $L_\text{train}(f_1;\cD_1)$, $L_\text{train}(f_2;f_1,\cD_2,\cD_{1})$, \dots, $L_\text{train}(f_T;f_{T-1},\cD_T,\cD_{1:T-1})$. Our results are presented in the following main theorem.

\begin{restatable}{theorem}{thmone}
\label{thm1}
For the contrastive continual learning involving $T\ge 2$ tasks, the test loss of the final model $f_T$ can be bounded via a linear combination of the training losses associated with each task. More specifically, the following two bounds are applicable.

(1) Upper bound:
\begin{align*}
&L_\text{\rm test}(f_T;\cD_1,\dots,\cD_T) \le \alpha^{T-1} L_\text{\rm train}(f_1;\cD_1) \\ &\quad + \sum_{t=2}^{T} \frac{\alpha^{T-t}}{\gamma_{t}(\lambda)} L_\text{\rm train}(f_t;f_{t-1},\cD_t,\cD_{1:{t-1}}) +\eta , 
\end{align*}

(2) Lower bound:
\begin{align*}
&L_\text{\rm test}(f_T;\cD_1,\dots,\cD_T) \ge \alpha^{T-1} L_\text{\rm train}(f_1;\cD_1)\\
&\quad + \sum_{t=2}^{T} \frac{\alpha^{T-t}}{\gamma_{t}'(\lambda)} L_\text{\rm train}(f_t;f_{t-1},\cD_t,\cD_{1:{t-1}}) +\eta' , 
\end{align*}
where
\begin{align*}
\begin{cases}
\alpha = \frac{2e^2}{ 1+e^2} ,\\
\gamma_{t}(\lambda) = \min\left(\{\frac{1}{t}\}\cup \{\lambda k_{tj}\}_{j=1}^{t-1}\right),   \\
\gamma_{t}'(\lambda) = \max \left(\{1\}\cup \{\lambda k_{tj}\}_{j=1}^{t-1}\right), \\
\eta =(2-\alpha+\alpha \log \frac{\alpha}{2}) \frac{T - 1 - T \alpha + (\alpha)^T}{(1-\alpha)^2}  \\
\qquad + \sum_{t=2}^{T} \alpha^{T-t} (1-\frac{1}{\gamma_{t}(\lambda)}) {\min}_f L_\text{\rm con}(f;\cD_t), \\
\eta' = -(\alpha \log(1+e^2) +\alpha) \frac{T - 1 - T \alpha + (\alpha)^T}{(1-\alpha)^2}.
\end{cases}
\end{align*}
\end{restatable}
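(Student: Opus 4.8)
The plan is to apply Lemma~\ref{lemma1} repeatedly to telescope each test-loss term $L_\text{con}(f_T;\cD_t)$ down to its training-time counterpart $L_\text{con}(f_t;\cD_t)$, and then to repackage the distillation terms accumulated along the way into the training losses using the mixture structure of $\cD_{1:t-1}$.

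First I would fix a task index $t\le T$ and set $\cD=\cD_t$ in Lemma~\ref{lemma1}. Applying the upper bound of the lemma to the consecutive pair $(f_s,f_{s-1})$ for $s=T,T-1,\dots,t+1$ and chaining the inequalities gives
\begin{align*}
L_\text{con}(f_T;\cD_t)\le \alpha^{T-t}L_\text{con}(f_t;\cD_t)+\sum_{s=t+1}^{T}\alpha^{T-s}L_\text{dis}(f_s;f_{s-1},\cD_t)+\beta\frac{1-\alpha^{T-t}}{1-\alpha},
\end{align*}
where each successive application contributes one extra power of $\alpha$ and the geometric factor collects the $\beta$ constants. Summing over $t=1,\dots,T$ bounds $L_\text{test}(f_T;\cD_1,\dots,\cD_T)$ by (i) a weighted sum of contrastive losses $\alpha^{T-t}L_\text{con}(f_t;\cD_t)$, (ii) a double sum of single-task distillation terms, and (iii) a constant. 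Swapping the order of summation in (ii) regroups it as $\sum_{s=2}^{T}\alpha^{T-s}\sum_{t=1}^{s-1}L_\text{dis}(f_s;f_{s-1},\cD_t)$, attaching all distillation terms with factor $\alpha^{T-s}$ to the step that produced $f_s$.

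The crux is then to show, for each fixed $s\ge 2$ (with the common factor $\alpha^{T-s}$ pulled out),
\begin{align*}
L_\text{con}(f_s;\cD_s)+\sum_{t=1}^{s-1}L_\text{dis}(f_s;f_{s-1},\cD_t)\le \frac{1}{\gamma_s(\lambda)}L_\text{train}(f_s;f_{s-1},\cD_s,\cD_{1:s-1})+\Big(1-\frac{1}{\gamma_s(\lambda)}\Big)\min_f L_\text{con}(f;\cD_s).
\end{align*}
For the distillation part I would use that $\cD_{1:s-1}=\sum_{j=1}^{s-1}k_{sj}\cD_j$ makes $L_\text{dis}$ decompose over tasks, so that since $\gamma_s(\lambda)\le \lambda k_{sj}$ for every $j$ the coefficient $\tfrac{\lambda}{\gamma_s(\lambda)}k_{sj}\ge 1$, whence $\tfrac{\lambda}{\gamma_s(\lambda)}L_\text{dis}(f_s;f_{s-1},\cD_{1:s-1})$ dominates $\sum_{t=1}^{s-1}L_\text{dis}(f_s;f_{s-1},\cD_t)$ term by term by nonnegativity of the cross-entropy. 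For the contrastive part, since $\gamma_s(\lambda)\le \tfrac1s\le 1$ we have $\tfrac1{\gamma_s(\lambda)}\ge 1$, and $L_\text{con}(f_s;\cD_s)\ge \min_f L_\text{con}(f;\cD_s)$ lets me inflate the coefficient on $L_\text{con}(f_s;\cD_s)$ from $1$ to $\tfrac1{\gamma_s(\lambda)}$ at the cost of the stated additive $\min_f$ correction. The $t=1$ term is already exactly $\alpha^{T-1}L_\text{train}(f_1;\cD_1)$, matching the leading term. Substituting these per-task bounds and collecting the constant via $\sum_{t=1}^{T}\beta\frac{1-\alpha^{T-t}}{1-\alpha}=\beta\frac{T-1-T\alpha+\alpha^T}{(1-\alpha)^2}$ reproduces exactly $\eta$ together with its $\min_f$ corrections.

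The lower bound follows the same telescoping with the second inequality of Lemma~\ref{lemma1} (replacing $\beta$ by $\beta'$) and the analogous repackaging using $\gamma_s'(\lambda)=\max(\{1\}\cup\{\lambda k_{sj}\}_{j=1}^{s-1})\ge 1$; now $\tfrac1{\gamma_s'(\lambda)}\le 1$ and $\tfrac{\lambda}{\gamma_s'(\lambda)}k_{sj}\le 1$, every comparison reverses, and no $\min_f$ correction is needed since $\tfrac1{\gamma_s'(\lambda)}L_\text{con}(f_s;\cD_s)\le L_\text{con}(f_s;\cD_s)$ holds directly. I expect the main obstacle to be this distillation repackaging: one must pin down precisely how $L_\text{dis}$ behaves under the mixture $\cD_{1:s-1}=\sum_j k_{sj}\cD_j$ (its decomposition over the constituent tasks and the handling of the independently drawn negative class) so that the \emph{first} power of $k_{sj}$, rather than $k_{sj}^2$, governs $\gamma_s(\lambda)$; the remainder is bookkeeping of geometric series and sign-tracking of the coefficients.
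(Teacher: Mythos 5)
Your proposal is correct and is essentially the paper's own argument in a different order: the paper recurses on $L_\text{test}(f_s;\cD_{1:s})$, absorbing at each step the newly produced distillation terms into $\frac{1}{\gamma_s(\lambda)}L_\text{train}(f_s;f_{s-1},\cD_s,\cD_{1:s-1})$ via exactly your two coefficient comparisons ($\gamma_s(\lambda)\le\lambda k_{sj}$ and $\gamma_s(\lambda)\le\frac{1}{s}\le 1$ with the $\min_f$ correction, reversed with $\gamma_s'(\lambda)$ for the lower bound), whereas you telescope each $L_\text{con}(f_T;\cD_t)$ first and then swap the order of summation, yielding the identical bounds and constants. The one obstacle you flag — getting the first power of $k_{sj}$ in the mixture decomposition — is resolved in the paper simply by asserting $L_\text{dis}(f_t;f_{t-1},\cD_{1:t-1})=\sum_{j=1}^{t-1}k_{tj}L_\text{dis}(f_t;f_{t-1},\cD_j)$, i.e., interpreting $\cD_{1:t-1}=\sum_j k_{tj}\cD_j$ hierarchically so that $c^+$ and $c^-$ are drawn from the same component task $j$ (rather than independently from the mixed class distribution, which would indeed create $k_{tj}k_{tj'}$ cross terms).
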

The proof for the theorem can be found in the appendix.
It can be concluded from Theorem~\ref{thm1} that, the performance of the final model $f_T$ on all $T$ tasks, namely $L_\text{test}(f_T;\cD_1,\dots,\cD_T)$, can be well bounded by training losses on all seen tasks, suggesting that minimizing $L_\text{train}(f_t;f_{t-1},\cD_t,\cD_{1:{t-1}})$ during each task $t$ can help to improve the performance of the final model on all seen tasks. Note that there is also a lower bound of $L_\text{test}(f_T;\cD_1,\dots,\cD_T)$, which means that minimizing the training loss during each task $t$ is necessary. In particular, given that the training loss is a weighted sum of contrastive loss and distillation loss, these bounds also emphasize the necessity of both contrastive loss and distillation loss in effectively learning a contrastive continual learning model.

Taking inspiration from Theorem~\ref{thm1}, we can infer that pre-training can benefit continual learning. The coefficients of training losses associated with each task become fixed if $\lambda$ exceeds a certain value. For example, the denominators of the coefficients of training losses, i.e., $\{\gamma_{t}(\lambda)\}_{t=2}^T$ for the upper bound become constant values if $\lambda$ is large. Note that the component $ \alpha>1$, then the weight ${\alpha^{T-t}}/{\gamma_{t}(\lambda)}$ for the training loss of task $t$ decreases greatly as $t$ increases, reducing the importance of task $t$ in the bounds. Therefore, we have the following corollary, which shows that improved training performance of initial tasks contributes more to improving the later models' performance guarantees than that of later tasks. This aligns with the idea that pre-training can benefit continual learning, as observed in previous literature \cite{Wang2022MetaLearningWL,hu2022how}.

After choosing a suitable distillation coefficient $\lambda$, training performances of initial tasks in contrastive continual learning contribute more to improving the overall performance of the final model on all tasks compared with that of the latter ones, explaining that a well pre-trained network can benefit continual learning.

We conclude from the statement above that small changes in the training performance on the first task may lead to great changes in the overall performance of the final model. For example, the weight of $L_\text{train}(f_1;\cD_1)$ in the upper bound increases greatly when adding more tasks, and a large value of $L_\text{train}(f_1;\cD_1)$ implies a potential great increase of the upper bound. Therefore, well-trained initial models with small training losses in continual learning can be beneficial.

\section{Further Discussion on the Distillation Coefficient $\lambda$}
\subsection{Analysis on the distillation coefficient}

Inspired by additional analysis on Theorem~\ref{thm1}, we find that the suitable distillation coefficient $\lambda$ is correlated with 
the weights $\{\{k_{tj}\}_{j=1}^{t-1}\}_{t=1}^T$ that depends on the data distributions. Specifically, we would like to choose the suitable value of $\lambda$ as the turning point of the upper bound to get better theoretical guarantees. We define the turning point as the minimum value of $\lambda$ at which the upper bound no longer decreases. Once the distillation coefficient $\lambda$ exceeds the value of this turning point, the upper bound becomes a fixed value that is no longer influenced by $\lambda$, as illustrated in Figure~\ref{fig:2}. In the following part of this section, we will present several examples and calculate the corresponding turning point values. This may help us better understand the choice of distillation coefficient ($\lambda=1$) employed in the experiments of Co$^2$L \cite{cha2021co2l}.

We begin by clarifying that in the contrastive continual learning framework, choosing a fixed distillation coefficient as one for all tasks is favorable for achieving a balance between learning new tasks and preserving old knowledge. Specifically, with well-constructed weights $\{k_{tj}\}_{j=1}^{t-1}$ for task $t$, the suggested $\lambda$ value for learning tends to stay close to one, thereby contributing to a tighter upper bound. To illustrate this point, we provide an example below.
\begin{example}
\label{example1}
Assume that there are five tasks, each task with data distribution $\cD_t$, $t \in \{1,\dots,5\}$, and we have corresponding models $\{f_t\}_{t=1}^5$. We make the assumption that these models obtain the same value of training loss, i.e.,
\begin{align*}
L_\text{train}(f_5;f_{4},\mathcal{D}_5,\mathcal{D}_{1:4})= \cdots= L_\text{train}(f_1;\mathcal{D}_1).    
\end{align*}
Weights of tasks are given as follows, i.e., for $t \ge 2$,
\begin{align*}
k_{tj}=
\begin{cases}
\frac{2}{t}, & j=1,\\
\frac{1}{t}, & \text{else}.
\end{cases}
\end{align*}
These weights can be considered well-constructed, as they are uniform across different tasks, except for the weight of the first task which has a larger value than the others. Indeed, this strategy emphasizes the importance of the first task which is typically regarded as a base task.
\end{example}

Then according to our Theorem~\ref{thm1}, the value of appropriate $\lambda$ is suggested to be close to one to get a tighter upper bound on the overall performance for the final model. 

Note that in Example~\ref{example1}, we have assumed that the values of total training losses of different tasks remain the same, which may not align with realistic settings. To provide a more realistic illustration, we construct another example with adaptive ratios between the training losses of different tasks. Specifically, we allow the value of the training loss of each task to either increase or decrease with the same ratio $\rho$ close to one. The weights construction strategy is correlated to $\rho$ to maintain an alignment with the changing training loss. Constructed in this way, the following example illustrates how the appropriate value of $\lambda$ remains close to one even when $\rho$ fluctuates around one, and further achieves a tighter upper bound. This observation inspires us to consider adjusting the value of $\lambda$ around one.

\begin{figure}
    \centering
    \includegraphics[width=0.8\linewidth]{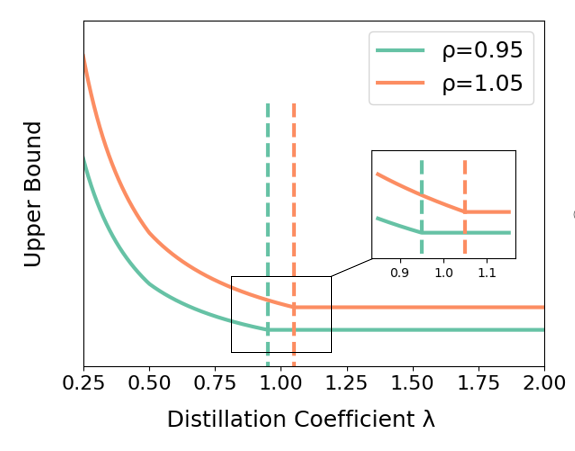}
    \caption{An illustration of Example~\ref{example2}. The suggesting $\lambda$ for $\rho=0.95$ or $\rho=1.05$ stays close to one. }
    \label{fig:2}
\end{figure}

\begin{example}
\label{example2}
Assume that there are five tasks, each task with data distribution $\cD_t$, $t \in \{1,\dots,5\}$, and we have corresponding models $\{f_t\}_{t=1}^5$. We assume that the value of the training loss of each task has a fixed ratio $\rho \approx 1$, i.e., 
\begin{align*}
L_\text{train}(f_5;f_{4},\mathcal{D}_5,\mathcal{D}_{1:4})= \cdots = \rho^4 L_\text{train}(f_1;\mathcal{D}_1).
\end{align*}
Weights of different tasks are given by a biased strategy related to $\rho$, i.e, for $t \ge 2$,
\begin{align*}
k_{tj}=
\begin{cases}
1-\frac{t-2}{\rho t}, & j=1, \\
\frac{1}{\rho t}, & \text{else}.
\end{cases}
\end{align*}
\end{example}

Then according to Theorem~\ref{thm1}, the value of appropriate $\lambda$ is suggested to get close to one as $\rho$ changes slightly around one, ensuring a tight upper bound on the overall performance of the final model. As illustrated in Figure~\ref{fig:2}, setting $\rho=0.95$ or $\rho=1.05$ implies that suggesting $\lambda$ value remains close to one. The settings in Example 2 are more realistic and can closely resemble the Co$^2$L configuration \cite{cha2021co2l}. Thus this example can further support the rationale for choosing $\lambda=1$ in Co$^2$L. 

However,  in an extreme case of the weights construction, an undesirable result may occur with a value of $\lambda$ greater than one. This insight suggests that data distribution may play a crucial role in selecting a suitable $\lambda$ since the weights $\{k_{tj}\}_{j=1}^{t-1}$ are strongly correlated with the data distribution of task $t$. As illustrated in the following example, when there exists a weight significantly smaller than the other weights, the value of appropriate $\lambda$ would deviate from one. In such a case, persistently using $\lambda=1$ may not achieve the best theoretical guarantees.
\begin{example}
\label{example3}
Take the same assumption from Example~\ref{example1}, excluding the weights construction strategy. Weights of tasks are given as follows, i.e., for $t \ge 3$,
\begin{align*}
k_{tj}=
\begin{cases}
\frac{2.9}{t}, & j=1, \\
\frac{0.1}{t}, & j=2, \\
\frac{1}{t}, & \text{else}.
\end{cases}
\end{align*}
For the second task, we set $k_{21}=1$. 
\end{example}

Then according to our Theorem~\ref{thm1}, the suggesting value of appropriate $\lambda$ is close to ten in Example~\ref{example3}. If we choose $\lambda=1$, the upper bound may get large and fail to provide the best guarantees. The failure of Example~\ref{example3} is attributed to a change in the weights construction strategy, highlighting a substantial relationship between the suitable value of distillation coefficient $\lambda$ and data distributions.

\subsection{Adaptive selection of distillation coefficients}

Inspired by our theoretical analysis, we are curious whether it is possible to provide better theoretical guarantees by dynamically adjusting distillation coefficients. 
Interestingly, the analysis of Theorem~\ref{thm1} can be adapted to the case of adaptive distillation coefficient $\lambda_t$, simply by replacing $\lambda$ with $\lambda_t$ for the coefficient of training loss of task $t$ in the bounds. Then we can conclude from Theorem~\ref{thm1} for the new case that, \emph{increasing} $\lambda_t$ for each task $t$ with a threshold strategy can provide better guarantees.

Note that our target is to get a set of distillation coefficients $\{\lambda_t\}_{t=2}^T$ that tighten the upper bound in Theorem~\ref{thm1}. Therefore, we want to adaptively select $\lambda_t$ for task $t$ to achieve this goal. We now propose our theoretical explanations for the adaptive selection of distillation coefficients. First, we give some related definitions. Suppose there are $T$ tasks for continual learning setting. For each task  $t \ge 2$, we denote $\lambda_t$ as the task-specific distillation coefficient, and define 
\begin{align*}
U_t= \sum_{j=2}^{t} \frac{\alpha^{t-j}}{\gamma_{j}(\lambda_t)} L_\text{train}(f_j;f_{j-1},\mathcal{D}_j,\mathcal{D}_{1:{j-1}}).
\end{align*}

Motivated by Theorem~\ref{thm1} and explanations above, at the end of task $t$, if the calculated $U_t$ has a relatively large value, then a slight increase in $\lambda_{t+1}$ around one can be beneficial for improving the upper bound. Inspired by this, we will maintain an extra set of task-specific threshold values $\{u_t\}_{t=2}^T$ where $u_t>0$, and a set of update momentums $\{\Delta_t\}_{t=1}^T$ where $ \Delta_t \ge 0$. After training task $t$, if $U_t > u_t$, we let $\lambda_{t+1}=\lambda_t+\Delta_t$, else, $\lambda_{t+1}=\lambda_t$. Then the total training loss of each task $t$ for model $f_t$ can be rewritten as 
\begin{align*}
&L_\text{train}(f_t;f_{t-1},\mathcal{D}_t,\mathcal{D}_{1:{t-1}}) \\
&= L_\text{con}(f_t;\mathcal{D}_t)+\lambda_t \cdot L_\text{dis}(f_t;f_{t-1},\mathcal{D}_{1:t-1}) .
\end{align*}

The following theorem provides a theoretical explanation for the benefits of the adaptive $\lambda_t$ selection protocol above.
\begin{restatable}{theorem}{thmtwo}
\label{thm2}
Assume that the training loss of each task is larger than zero. For each task $t\ge2$, there exists a task-specific constant $u_t>0$. If we have
\begin{align*}
U_t= \sum_{j=2}^{t} \frac{\alpha^{t-j}}{\gamma_{j}(\lambda_t)} L_\text{train}(f_j;f_{j-1},\mathcal{D}_j,\mathcal{D}_{1:{j-1}}) > u_t,
\end{align*}
where $\alpha$, $\{\gamma_{j}(\lambda_t)\}_{j=2}^t$ are defined in Theorem~\ref{thm1}, then we increase $\lambda_t$ by $\Delta_t$ , i.e., $\lambda_{t+1}=\lambda_t+\Delta_t$, else, $\lambda_{t+1}=\lambda_t$. By choosing $\lambda_t$ in this way, we get tighter upper bounds.
\end{restatable}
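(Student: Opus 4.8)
The plan is to treat the upper bound of Theorem~\ref{thm1} (with $\lambda$ replaced by the task-specific $\lambda_t$) as an explicit function of the coefficients $\{\lambda_t\}_{t=2}^T$, holding all observed training losses $L_j := L_\text{train}(f_j;f_{j-1},\cD_j,\cD_{1:j-1})$ fixed, exactly as the quantity $U_t$ already does. The first step is an algebraic regrouping: for each task $t$ I would collect the coefficient term $\frac{\alpha^{T-t}}{\gamma_t(\lambda_t)}L_t$ with the matching piece $\alpha^{T-t}\bigl(1-\frac{1}{\gamma_t(\lambda_t)}\bigr)m_t$ sitting inside $\eta$, where $m_t:=\min_f L_\text{con}(f;\cD_t)$. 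These combine into $\alpha^{T-t}\bigl[m_t+\frac{L_t-m_t}{\gamma_t(\lambda_t)}\bigr]$, so the whole bound becomes $\alpha^{T-1}L_1+\sum_{t=2}^T \alpha^{T-t}\bigl[m_t+\frac{L_t-m_t}{\gamma_t(\lambda_t)}\bigr]+\text{const}$, a sum that is \emph{separable} in the $\lambda_t$'s, with the leftover constant part of $\eta$ independent of every $\lambda_t$.

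Next I would establish monotonicity. Since $\gamma_t(\lambda)=\min\bigl(\{1/t\}\cup\{\lambda k_{tj}\}_{j=1}^{t-1}\bigr)$ is a minimum of the constant $1/t$ and the increasing linear maps $\lambda\mapsto\lambda k_{tj}$, it is non-decreasing in $\lambda$, so $1/\gamma_t(\lambda)$ is non-increasing. Combining $\alpha>1>0$ with $L_t-m_t\ge 0$ — which holds because $L_t=L_\text{con}(f_t;\cD_t)+\lambda_t L_\text{dis}\ge L_\text{con}(f_t;\cD_t)\ge m_t$, strictly once the distillation term is positive (guaranteed in the non-degenerate regime by the positivity assumption) — each summand is non-increasing in its own $\lambda_t$. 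Hence raising any $\lambda_t$ can only tighten the bound, strictly so while $\gamma_t$ is still strictly increasing.

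I would then pin down the turning point and use it to choose $u_t$. Because $\gamma_t(\lambda)\le 1/t$ always, we have $\frac{1}{\gamma_t(\lambda)}\ge t$, with equality iff $\lambda\ge \lambda_t^\star:=1/(t\min_j k_{tj})$; below $\lambda_t^\star$ one has $\gamma_t(\lambda)=\lambda\min_j k_{tj}$, strictly increasing. Setting $u_t:=\sum_{j=2}^t \alpha^{t-j} j\,L_j$ — the value $U_t$ takes when every $\gamma_j$ has saturated at $1/j$ — yields $U_t\ge u_t$ always, with $U_t>u_t$ \emph{iff} some $\gamma_j(\lambda_t)<1/j$, i.e.\ $\lambda_t$ still lies below a turning point. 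In that event the monotonicity step shows the further increase $\lambda_{t+1}=\lambda_t+\Delta_t$ strictly lowers the bound, whereas $U_t=u_t$ signals that all turning points are reached and holding $\lambda$ fixed is optimal; this is precisely the threshold rule, so the sequence of produced bounds is monotonically tightening.

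The main obstacle is the tension inside $\eta$: the coefficient $\alpha^{T-t}/\gamma_t$ and the term $\alpha^{T-t}(1-1/\gamma_t)m_t$ move in opposite directions as $\lambda_t$ grows, so naive term-by-term monotonicity fails; the argument hinges entirely on the cancellation into $m_t+(L_t-m_t)/\gamma_t$ and on verifying $L_t\ge m_t$. A secondary difficulty is that $U_t$ aggregates \emph{past} tasks $j\le t$ while the decision fixes $\lambda_{t+1}$ for a \emph{future} task; bridging this requires the momentum $\Delta_t$ to be small enough that $\lambda_{t+1}$ does not overshoot task $(t+1)$'s turning point (so the coefficient gain is not wasted while an inflated distillation weight raises the trained $L_{t+1}$), which is why the theorem only asserts the \emph{existence} of a suitable task-specific $u_t$ rather than an explicit universal constant.
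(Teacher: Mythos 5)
Your proposal is correct, and it is actually \emph{more} complete than the paper's own proof, which is a three-line monotonicity argument: the paper notes that $\gamma_j(\lambda)$ is non-decreasing in $\lambda$, concludes that the weighted sum $U_{t+1}' = \sum_{j=2}^{t+1} \frac{\alpha^{t+1-j}}{\gamma_j(\lambda_{t+1})} L_\text{train}(f_j;f_{j-1},\mathcal{D}_j)$ computed with the increased coefficient is at most $U_{t+1}$ computed with $\lambda_t$, dispatches the existence of $u_t$ with ``$U_t>0$, thus $u_t$ exists,'' and stops. Crucially, the paper never addresses the $\lambda$-dependence of $\eta$, which moves in the \emph{opposite} direction: as $\lambda$ grows, $1-\frac{1}{\gamma_t(\lambda)}$ increases toward $1-t\le -1$, so the term $\sum_{t=2}^T \alpha^{T-t}\bigl(1-\frac{1}{\gamma_t(\lambda)}\bigr)\min_f L_\text{con}(f;\mathcal{D}_t)$ inside $\eta$ increases, and term-by-term monotonicity of the full upper bound fails --- exactly the tension you identified. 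Your regrouping of $\frac{\alpha^{T-t}}{\gamma_t}L_t + \alpha^{T-t}\bigl(1-\frac{1}{\gamma_t}\bigr)m_t$ into $\alpha^{T-t}\bigl[m_t+\frac{L_t-m_t}{\gamma_t}\bigr]$, combined with the verification $L_t \ge L_\text{con}(f_t;\mathcal{D}_t)\ge m_t$ (non-negativity of the distillation cross-entropy), is precisely what makes ``increasing $\lambda_t$ tightens the whole bound'' true rather than merely true for the $U$-part; this repairs a genuine gap in the published argument. Your explicit threshold $u_t=\sum_{j=2}^t \alpha^{t-j} j L_j$ and the turning-point characterization $\gamma_t(\lambda)=\tfrac1t \iff \lambda \ge \frac{1}{t\min_j k_{tj}}$ also give the theorem's bare ``there exists $u_t>0$'' real content, and show where the positivity assumption on the training losses is actually needed (to make $U_t>u_t$ equivalent to non-saturation), whereas the paper uses positivity only to assert existence. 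The one unavoidable caveat --- which you flag yourself and which afflicts the paper equally --- is that $U_t>u_t$ concerns $\gamma_j(\lambda_t)$ for past tasks $j\le t$, while the decision affects $\gamma_{t+1}(\lambda_{t+1})$ with new weights $k_{t+1,j}$, so only the weak conclusion (never looser, strictly tighter before saturation) is obtainable; likewise both arguments tacitly hold the realized training losses fixed as $\lambda$ varies. Given the informality of the statement, that is all it demands.
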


It can be concluded from Theorem~\ref{thm2} that, increasing $\lambda_t$ by a threshold strategy about the performance of the current model can help make the upper bound tighter. Moreover, Theorem~\ref{thm2} can also provide theoretical support for the choosing strategy of $\lambda$ in previous examples. 

\subsection{Contrastive Incremental Learning with Adaptive distillation (CILA)}

Note that we lack access to the construction of weights during the real training phase. Consequently, computing $U_t$ is not available for task $t$, and estimating $\{k_{tj}\}_{j=1}^{t-1}$ may be also inaccessible due to the high computing load and low estimating accuracy. However, according to our explanations above, it is suggested that the adaptive $\lambda_t$ for each task $t$ stays around one with a relatively larger value. Hence, we can find an easy-to-get and adaptive metric to replace $\lambda_t$. Before we introduce the chosen metric, we first give some related definitions. We first define the empirical contrastive loss \cite{Khosla2020SupervisedCL}. For each task $t$, we denote $D_t$ as the given batch of $N$ training samples $\{(x_{t,i},c_{t,i})\}_{i=1}^N$, and the augmented batch is $\{(\tilde{x}_{t,i},\tilde{c}_{t,i})\}_{i=1}^{2N}$ which is generated by making two randomly augmented versions of $x_{t,i}$ as $\tilde{x}_{t,2i-1}$ and $\tilde{x}_{t,2i}$ with $\tilde{c}_{t,2i-1}=\tilde{c}_{t,2i}=c_{t,i}$. The augmented samples are mapped to a $d$-dimensional Euclidean sphere by a model $f_t$. Denote $\boldsymbol{z}_{t,i}=f_t(\tilde{x}_{t,i})$, then the empirical contrastive loss can be formulated as
\begin{align*}
&\hat{L}_\text{con}(f_t;D_t) \\
&=\sum_{i=1}^{2N} \frac{-1}{|p_{t,i}|} \sum_{j \in p_{t,i}} \log \left( \frac{\exp(\boldsymbol{z}_{t,i}^\top \boldsymbol{z}_{t,j}/ \tau )}{\sum_{k \ne i} \exp(\boldsymbol{z}_{t,i}^\top \boldsymbol{z}_{t,k}/ \tau )}\right),
\end{align*}
where $p_{t,i}=\{j \in \{1,\dots,2N\}|j \ne i, c_{t,j}=c_{t,i}\}$ and $\tau>0$ is the temperature hyperparameter. Then we define the empirical distillation loss \cite{cha2021co2l}. We first define a similarity vector 
\begin{align*}
\boldsymbol{p}(f_t,\tau;\tilde{x}_i)=\text{softmax} &( \boldsymbol{z}_{t,i} ^\top \boldsymbol{z}_{t,1}/\tau, \dots, \boldsymbol{z}_{t,i} ^\top \boldsymbol{z}_{t,i-1}/\tau, \\
&\boldsymbol{z}_{t,i} ^\top \boldsymbol{z}_{t,i+1}/\tau, \dots \boldsymbol{z}_{t,i} ^\top \boldsymbol{z}_{t,2N}/\tau ).
\end{align*} 
Then the empirical distillation loss is formulated as
\begin{align*}
&\hat{L}_\text{dis}(f_t;f_{t-1},D_t) \\
&=\sum_{i=1}^{2N} -\boldsymbol{p}(f_{t-1},\tau^*;\tilde{x}_{t,i}) \cdot \log \boldsymbol{p}(f_t,\tau;\tilde{x}_{t,i}).
\end{align*}
Here, both $\tau^*$ and $\tau$ will remain fixed for all tasks. Actually, when constructing the experiment, we found that the ratio 
\begin{align*}
\sum^{t-1}_{j=2 }\hat{L}_\text{dis}(f_j;f_{j-1},D_j)/ \sum^{t-1}_{j=2 } \hat{L}_\text{con}(f_j;D_j), 
\end{align*}
is stable and stays close to one as task index $t\ge 3$  varies. This ratio is easy to get during the training procedure and is aligned with the idea that the distillation coefficient is suggested to stay close to one and varies according to the data distribution which depends on the task index. Therefore, inspired by Example~\ref{example2} and~\ref{example3}, an applicable method is to use  
\begin{align*}
\lambda_t
=\max (1, \kappa \sum^{t-1}_{j=2 }\hat{L}_\text{dis}(f_j;f_{j-1},D_j)/ \sum^{t-1}_{j=2 } \hat{L}_\text{con}(f_j;D_j) ),
\end{align*}
for task $t \ge 3$, where $\kappa$ is a balancing distillation coefficient, and for the second task, we use $\lambda_2=1$. More details can be found in Algorithm~\ref{algorithm}. In the following section, we will conduct several experiments.

\begin{algorithm}[t]
\caption{CILA: Contrastive Incremental Learning with Adaptive distillation}
\begin{algorithmic}[1]
\label{algorithm}

\REQUIRE Buffer size $B$, 
a sequence of training sets $\{D_t\}^T_{t=1}$,
base distillation coefficient $\lambda_0$,
balancing distillation coefficient $\kappa$.

\STATE  Initialize model $f_0$ and set buffer $\mathcal{M}\gets \emptyset$;
\FOR{task $t = 1, \cdots, T$}

\STATE Construct dataset ${D} \gets D_t \cup \mathcal{M}$;
\STATE Initialize model $f_t\gets f_{t-1}$;

\STATE Compute ${L}$ by ${L} \gets \hat{L}_\text{con}(f_{t};D)$; \\
\IF {$t > 1 $}
\STATE Adaptively update $\lambda_{t}$ by \\ $\lambda_t \gets \max(\lambda_0, \kappa \cdot \frac{\sum^{t-1}_{j=2 }\hat{L}_\text{dis}(f_j;f_{j-1},D_j)}{ \sum^{t-1}_{j=2 } \hat{L}_\text{con}(f_j;D_j) }) $; 
\STATE Update $ {L}$ by \\ $ {L} \gets  {L} + \lambda_{t} \cdot  \hat{L}_\text{dis}(f_{t};f_{t-1},D)$;
\ENDIF

\STATE Update 
$f_{t}$ by SGD;

\STATE Collect buffer samples until $|\mathcal{M}|=B$;
\ENDFOR
\end{algorithmic}
\end{algorithm}
\vspace{-0.5em}

\section{Experiment}

\begin{table*}[t]
\caption{Classification accuracies for Seq-CIFAR-10, Seq-Tiny-ImageNet, and R-MNIST on replay-based baselines and our algorithm. All results are averaged over ten independent trials. The best performance is marked as bold.
}
\setlength{\tabcolsep}{0.2cm}
\centering
\vskip 0.15in
\renewcommand{\arraystretch}{1.2}
\resizebox{\textwidth}{!}{%
\begin{tabular}{c|cccc|cccc|cc}
\toprule
Dataset & \multicolumn{4}{c|}{ Seq-CIFAR-10 } & \multicolumn{4}{c|}{ Seq-Tiny-ImageNet } &  \multicolumn{2}{c}{R-MNIST} \\
\midrule
Scenario & \multicolumn{2}{c}{Class-IL} & \multicolumn{2}{c|}{Task-IL}  & \multicolumn{2}{c}{Class-IL} & \multicolumn{2}{c|}{Task-IL} & \multicolumn{2}{c}{Domain-IL} \\
\midrule
Buffer & 200 & 500 & 200 & 500  & 200 & 500 & 200 & 500 & 200 & 500 \\
\midrule
ER 
& 44.79\stdv{1.86} & 57.74\stdv{0.27} 
& 91.19\stdv{0.94} & 93.61\stdv{0.27} 
& 8.49\stdv{0.16} & 9.99\stdv{0.29}  
& 38.17\stdv{2.00}  & 48.64\stdv{0.46}  
& 93.53\stdv{1.15}  & 94.89\stdv{0.95}  \\
GEM 
& 25.54\stdv{0.76} & 26.20\stdv{1.26} 
& 90.44\stdv{0.94} & 92.16\stdv{0.64} 
& -- & --
& --  & --  
& 89.86\stdv{1.23}  & 92.55\stdv{0.85}  \\
A-GEM 
& 20.04\stdv{0.34} & 22.67\stdv{0.57} 
& 83.88\stdv{1.49} & 89.48\stdv{1.45} 
& 8.07\stdv{0.08} & 8.06\stdv{0.04}  
& 22.77\stdv{0.03}  & 25.33\stdv{0.49}  
& 89.03\stdv{2.76}  & 89.04\stdv{7.01}  \\
iCaRL
& 49.02\stdv{3.20} & 47.55\stdv{3.95} 
& 88.99\stdv{2.13} & 88.22\stdv{2.62} 
& 7.53\stdv{0.79} & 9.38\stdv{1.53}  
& 28.19\stdv{1.47}  & 31.55\stdv{3.27}  
& --  & --  \\
FDR 
& 30.91\stdv{2.74} & 28.71\stdv{3.23} 
& 91.01\stdv{0.68} & 93.29\stdv{0.59} 
& 8.70\stdv{0.19} & 10.54\stdv{0.21}  
& 40.36\stdv{0.68}  & 49.88\stdv{0.71}  
& 93.71\stdv{1.51}  & 95.48\stdv{0.68}  \\
GSS
& 39.07\stdv{5.59} & 49.73\stdv{4.78} 
& 88.80\stdv{2.89} & 91.02\stdv{1.57} 
& -- & -- 
& --  & --  
& 87.10\stdv{7.23}  & 89.38\stdv{3.12}  \\
HAL 
& 32.36\stdv{2.70} & 41.79\stdv{4.46} 
& 82.51\stdv{3.20} & 84.54\stdv{2.36} 
& -- & -- 
& --  & --  
& 89.40\stdv{2.50}  & 92.35\stdv{0.81}  \\
DER 
& 61.93\stdv{1.79} & 70.51\stdv{1.67} 
& 91.40\stdv{0.92} & 93.40\stdv{0.39} 
& 11.87\stdv{0.78} & 17.75\stdv{1.14}  
& 40.22\stdv{0.67}  & 51.78\stdv{0.88}  
& 96.43\stdv{0.59}  & 97.57\stdv{1.47}  \\
DER++ 
& 64.88\stdv{1.17} & 72.70\stdv{1.36} 
& 91.92\stdv{0.60} & 93.88\stdv{0.50} 
& 10.96\stdv{1.17} & 19.38\stdv{1.41}  
& 40.87\stdv{1.16}  & 51.91\stdv{0.68}  
& 95.98\stdv{1.06}  & 97.54\stdv{0.43}  \\
Co$^{2}$L 
& 65.57\stdv{1.37} & 74.26\stdv{0.77} 
& 93.43\stdv{0.78} & 95.90\stdv{0.26} 
& 13.88\stdv{0.40} & 20.12\stdv{0.42}  
& 42.37\stdv{0.74}  & 53.04\stdv{0.69}  
& 97.90\stdv{1.92}  & 98.65\stdv{0.31}  \\
\textbf{CILA (Ours)}
& \textbf{67.06\stdv{1.59}} & \textbf{76.03\stdv{0.79}}  
& \textbf{94.29\stdv{0.24}}  & \textbf{96.40\stdv{0.21}} 
& \textbf{14.55\stdv{0.39}}  & \textbf{20.64\stdv{0.59}}  
& \textbf{44.15\stdv{0.70}}   & \textbf{54.13\stdv{0.72}}  
& \textbf{98.36\stdv{0.45}}   & \textbf{98.76\stdv{0.22}}   \\
\bottomrule
\end{tabular}}

\label{tab:main-table}
\vspace{-1em}
\end{table*}

\textbf{Learning settings and datasets. }We conducted experiments on three basic continual learning scenarios, Class-IL, Task-IL, and Domain-IL \cite{vandeVen2019ThreeSF}. Each scenario was evaluated using different datasets. Specifically, for Class-IL and Task-IL, we utilized Seq-CIFAR-10 and Seq-Tiny-ImageNet datasets. Seq-CIFAR-10 is a modified version of the CIFAR-10 \cite{Krizhevsky2009LearningML} dataset, where it is divided into 5 distinct subsets, each comprising two classes. Similarly, Seq-Tiny-ImageNet is an adapted version of the Tiny-ImageNet \cite{Le2015TinyIV} dataset, where the 200 classes are split into 10 separate sets, each containing 20 classes. The order of splits in Seq-CIFAR-10 and Seq-Tiny-ImageNet remains consistent across multiple runs.

For Domain-IL, we employed R-MINST, which is a variant of the MNIST \cite{Lecun1998Grad} dataset. In R-MINST, the original images are randomly rotated by an angle between 0 and $\pi$. R-MINST consists of 20 tasks, with each task corresponding to a randomly selected rotation angle. During the training process, samples from different tasks with the same digital class are treated as distinct classes.

In summary, our experiments covered Class-IL, Task-IL, and Domain-IL scenarios, utilizing Seq-CIFAR-10, Seq-Tiny-ImageNet, and R-MINST datasets, respectively.

\textbf{Baselines.}  
We compare our contrastive continual learning algorithm with replay-based continual learning baselines, including ER \cite{riemer2019learning}, GEM \cite{lopezpaz2017gradient}, A-GEM \cite{Chaudhry2018EfficientLL}, iCaRL \cite{rebuffi2017icarl}, FDR \cite{benjamin2019measuring}, GSS \cite{Aljundi2019GradientBS}, HAL \cite{Chaudhry2019UsingHT}, DER \cite{buzzega2020dark}, DER++ \cite{buzzega2020dark}, and Co$^2$L \cite{cha2021co2l}.

\textbf{Details of training. } Following the configuration of previous studies, we trained ResNet-18 on the Seq-CIFAR-10 and Tiny-ImageNet datasets. We implemented a simple network with convolution layers for the R-MNIST dataset. In our training process, we employed buffers of sizes 200 and 500. The base distillation coefficient $\lambda_0$ is set as one following the default configuration of Co$^2$L \cite{cha2021co2l}.

\textbf{Evaluation. }
Like Co$^2$L, CILA follows the idea of ``first pre-training, then linear probing''. Thus, unlike the joint representation-classifier training approaches, an additional classifier needs to be trained on top of the frozen representations. To ensure a fair comparison, the classifier is trained using only the samples from the last task and buffered samples, leveraging the representations learned by CILA. To mitigate the challenges posed by class imbalance, we employ a class-balanced sampling strategy during the training of a linear classifier. The strategy involves the following steps. We first uniformly select a class from the available set of classes. This ensures that each class has an equal chance of being chosen. Once a class is selected, we further uniformly sample an instance from that specific class. This guarantees that all instances within the chosen class are equal to be selected. 

For all experiments, a linear classifier is trained for a fixed number of epochs and we adopt 100 epochs to align with prior work. After training, the classification test accuracy is reported based on the predictions made by this classifier. 

\textbf{Main results.} In Table~\ref{tab:main-table}, our method outperforms all baselines in different scenarios, datasets, and buffer sizes, especially compared with Co$^2$L. This result verifies the superiority of our adaptive method and supports our theories strongly. Our algorithm successfully reaches a balance between learning plasticity and memory stability in continual learning. Under appropriate adaptation of the distillation coefficients, we also mitigate the catastrophic forgetting problem. Besides, the power of adaptation also impacts the performance of the learned continual learner, we will talk about it in the following ablation study.

\begin{table}
\caption{Accuracies on Seq-CIFAR-10 with 200 buffer samples.}
\setlength{\tabcolsep}{0.5cm}
\centering
\vskip 0.15in
\renewcommand{\arraystretch}{1.2}
\resizebox{0.8\linewidth}{!}{%
\begin{tabular}{cc|cc}
\toprule
Adaptive Method & Class-IL & Task-IL \\ \midrule
Co{$^2$}L & 65.57 & 93.43 \\
Pure-adapted       & 66.52 & 94.27 \\
Min-adapted      & 66.36 & 94.21 \\
Max-adapted & \textbf{67.06} & \textbf{94.29} \\ \bottomrule
\end{tabular} }
\label{tab:various adapted methods}
\end{table}

\section{Ablation Studies}

We conduct ablation experiments to verify the effectiveness of adaptive distillation coefficients. We consider two setups, Class-IL and Task-IL, and perform experiments on Seq-CIFAR-10 with three variants of adapted $\lambda_t$ for each task $t$. Variants include

(1) pure-adapted
\begin{align*}
\lambda_{t,\text{pure}}= \kappa \sum^{t-1}_{j=2 }\hat{L}_\text{dis}(f_j,f_{j-1};D_j)/ \sum^{t-1}_{j=2 } \hat{L}_\text{con}(f_j;D_j),
\end{align*}
(2) min-adapted $$\lambda_{t,\text{min}}=\min(1, \lambda_{t, pure}),$$ 
and (3) max-adapted $$\lambda_{t, \text{max}}=\max(1,  \lambda_{t,pure}),$$ 
where $\kappa$ is a balancing distillation coefficient. For our ablation experiments, we train the linear classifier on top of the representations with 200 buffer samples. 

As shown in Table~\ref{tab:various adapted methods}, methods with adaptive distillation coefficients show superior performance compared with the method with a fixed distillation coefficient with about $1\%$ improvement on both settings. As the adaptive $\lambda_t$ increases with moderate limitations, the performance of the model boosts with an obvious improvement on Class-IL. This verifies our assumption based on the theoretical results that continual learners with larger adaptive distillation coefficients show greater performance.

\section{Conclusion}

Contrastive learning has demonstrated remarkable performance in the field of continual learning, although there remains a lack of theoretical explanations. In this study, we aim to fill this gap by introducing theoretical performance guarantees for the final model in contrastive continual learning. Drawing inspirations from a detailed theoretical analysis, we propose the utilization of adaptive distillation coefficients for the distillation training loss in contrastive continual learning. Through comprehensive experiments conducted in diverse settings for continual learning, our approach surpasses baseline methods in terms of performance. We anticipate that our work can establish a robust foundation for continual learning from a representation perspective, and potentially spark further theoretical insights into the realm of contrastive continual learning.

\section*{Acknowledgment}
Weiran Huang is supported by 2023 CCF-Baidu Open Fund and Microsoft Research Asia.
Chuanlong Xie is supported by the National Nature Science Foundation of China (No.12201048).

We would also like to express our sincere gratitude to the reviewers of ICML 2024 for their insightful and constructive feedback. Their valuable comments have greatly contributed to improving the quality of our work.

\bibliography{reference}
\bibliographystyle{icml2024}

\clearpage
\appendix
\onecolumn
\begin{center}
    \Large \bf Appendix
\end{center}

\section{Proof of Lemma~\ref{lemma1}}
\label{prooflemma1}
We recall Lemma~\ref{lemma1}.
\lma*

\begin{proof}
For model $f$, denote the data pair $\boldsymbol{x}=(x,x^+,x^-)$ to simplify the proof, we first define 
$v(f;\boldsymbol{x})=f(x)^{\top} (f(x^+) - f(x^-))$, and
\begin{align*}
q(f;\boldsymbol{x}) =\frac{\exp(v(f;\boldsymbol{x}))}{1+\exp(v(f;\boldsymbol{x}))}.
\end{align*}
For models $f_t$ and $f_{t-1}$, the function $\ell(v)=\log(1+\exp(-v))$, and ${\boldsymbol{p}}(f;\boldsymbol{x})=\softmax(f(x)^\top f(x^+), f(x)^\top f(x^-))$, we have the following equation
\begin{align*}
& {-}{{\boldsymbol{p}}(f_{t-1};\boldsymbol{x})}\cdot{\log{\boldsymbol{p}}(f_t;\boldsymbol{x})} \\
& =-q(f_{t-1};\boldsymbol{x}) \log(q(f_{t};\boldsymbol{x}))-(1-q(f_{t-1};\boldsymbol{x})) \log(1-q(f_{t};\boldsymbol{x})) \\
& =q(f_{t-1};\boldsymbol{x}) \log(1+\exp(-v(f_t;\boldsymbol{x})))+(1-q(f_{t-1};\boldsymbol{x}))\log(1+\exp(v(f_t;\boldsymbol{x}))) \\
&=q(f_{t-1};\boldsymbol{x}) \log(1+\exp(-v(f_t;\boldsymbol{x})))+(1-q(f_{t-1};\boldsymbol{x}))\log(1+\exp(-v(f_t;\boldsymbol{x})))  +(1-q(f_{t-1};\boldsymbol{x}))\log(\exp(v(f_t;\boldsymbol{x}))) \\
&=\ell(v(f_t;\boldsymbol{x}))+(1-q(f_{t-1};\boldsymbol{x}))v(f_t;\boldsymbol{x}).
\end{align*}

Then for any data distribution $\mathcal{D}$, we have 
\begin{align*}
&L_\text{dis}(f_t;f_{t-1},\mathcal{D}) \\
&=\E_{\substack{c^+ \sim \mu\\ c^- \sim \mu } }
\E_{\substack{x,x^+ \sim \cD_{c^+} \\
    x^- \sim \cD_{{c^-}}}}  -{{\boldsymbol{p}}(f_{t-1};\boldsymbol{x})}\cdot{\log{\boldsymbol{p}}(f_t;\boldsymbol{x})} \\
&=\E_{\substack{c^+ \sim \mu\\ c^- \sim \mu } }
\E_{\substack{x,x^+ \sim \cD_{c^+} \\
    x^- \sim \cD_{{c^-}}}} \ell(v(f_t;\boldsymbol{x})) +\E_{\substack{c^+ \sim \mu\\ c^- \sim \mu } }
\E_{\substack{x,x^+ \sim \cD_{c^+} \\
    x^- \sim \cD_{{c^-}}}}  (1-q(f_{t-1};\boldsymbol{x}))v(f_t;\boldsymbol{x})
    \\
&= L_\text{con}(f_t;\mathcal{D})+\E_{\substack{c^+ \sim \mu\\ c^- \sim \mu } }
\E_{\substack{x,x^+ \sim \cD_{c^+} \\
    x^- \sim \cD_{{c^-}}}}  (1-q(f_{t-1};\boldsymbol{x}))v(f_t;\boldsymbol{x}). 
\end{align*}

Note that $f_t,f_{t-1}$ are normalized, i.e., 
$-2 \le v(f_{t-1};\boldsymbol{x}) \le 2$, and
$-2 \le v(f_t;\boldsymbol{x}) \le 2$. We first prove the upper bound. By using the following inequality
\begin{align*}
\alpha \log(1+e^{-h}) + \beta - \frac{2}{1+e^h} \ge 0,
\end{align*}
where $\alpha = \frac{2e^2}{1+e^2}$, $\beta = 2-\alpha +\alpha  \log \frac{\alpha}{2} $, $-2 \le h \le 2$, and using $v(f_{t-1};\boldsymbol{x})$ to replace $h$, we have  
\begin{align*}
(1-q(f_{t-1};\boldsymbol{x})) v(f_t;\boldsymbol{x}) &  \ge -2 (1-q(f_{t-1};\boldsymbol{x})) \\
&  = - \frac{2}{1+\exp(v(f_{t-1};\boldsymbol{x}))} \\
&  \ge - \alpha \log(1+\exp(-v(f_{t-1};\boldsymbol{x}))) - \beta \\
&  = - \alpha \ell(v(f_{t-1};\boldsymbol{x})) - \beta.
\end{align*}

Using the result above, we have 
\begin{align*}
L_\text{dis}(f_t;f_{t-1},\mathcal{D}) 
&=L_\text{con}(f_t;\mathcal{D})+\E_{\substack{c^+ \sim \mu\\ c^- \sim \mu } }
\E_{\substack{x,x^+ \sim \cD_{c^+} \\
    x^- \sim \cD_{{c^-}}}}  (1-q(f_{t-1};\boldsymbol{x}))v(f_t;\boldsymbol{x})\\
& \ge L_\text{con}(f_t;\mathcal{D})+\E_{\substack{c^+ \sim \mu\\ c^- \sim \mu } } \E_{\substack{x,x^+ \sim \cD_{c^+} \\     x^- \sim \cD_{{c^-}}}} [ -  \alpha \ell(v(f_{t-1};\boldsymbol{x})) - \beta ]\\
& = L_\text{con}(f_t;\mathcal{D}) - \alpha L_\text{con}(f_{t-1};\mathcal{D}) - \beta.
\end{align*}
which means
\begin{align*}
L_\text{con}(f_t;\mathcal{D}) \le \alpha L_\text{con}(f_{t-1};\mathcal{D}) + L_\text{dis}(f_t;f_{t-1},\mathcal{D}) + \beta.
\end{align*}
This finishes the upper bound part. Then let us consider the lower bound.  We use the inequality
\begin{align*}
\alpha \log(1+e^{-h}) + \beta' + \frac{2}{1+e^h} \le 0,
\end{align*}
where $\alpha = \frac{2e^2}{1+e^2}$, $\beta' = -\alpha \log(1+e^2) - \alpha$, $-2 \le h \le 2$, and using $v(f_{t-1};\boldsymbol{x})$ to replace $h$, then we have  
\begin{align*}
(1-q(f_{t-1};\boldsymbol{x})) v(f_t;\boldsymbol{x})
& \le 2 (1-q(f_{t-1};\boldsymbol{x})) \\
& =  \frac{2}{1+\exp(v(f_{t-1};\boldsymbol{x}))} \\
& \le - \alpha \log(1+\exp(-v(f_{t-1};\boldsymbol{x})) - \beta' \\
& = - \alpha \ell(v(f_{t-1};\boldsymbol{x})) - \beta'.
\end{align*}

Using the result above, we have
\begin{align*}
L_\text{dis}(f_t;f_{t-1},\mathcal{D}) 
&=L_\text{con}(f_t;\mathcal{D})+\E_{\substack{c^+ \sim \mu\\ c^- \sim \mu } }
\E_{\substack{x,x^+ \sim \cD_{c^+} \\
    x^- \sim \cD_{{c^-}}}}  (1-q(f_{t-1};\boldsymbol{x}))v(f_t;\boldsymbol{x})\\
& \le L_\text{con}(f_t;\mathcal{D})+\E_{\substack{c^+ \sim \mu\\ c^- \sim \mu } } \E_{\substack{x,x^+ \sim \cD_{c^+} \\     x^- \sim \cD_{{c^-}}}} [ -  \alpha \ell(v(f_{t-1};\boldsymbol{x})) - \beta ']\\
& = L_\text{con}(f_t;\mathcal{D}) - \alpha L_\text{con}(f_{t-1};\mathcal{D}) - \beta'.\\
\end{align*} 
It can be translated into
\begin{align*}
L_\text{con}(f_t;\mathcal{D}) \ge \alpha L_\text{con}(f_{t-1};\mathcal{D}) + L_\text{dis}(f_t;f_{t-1},\mathcal{D}) + \beta'.
\end{align*}
The proof has finished.
\end{proof}

\section{Proof of Theorem~\ref{thm1}}
\label{proofthm1}
We recall Theorem~\ref{thm1}.
\thmone*

\begin{proof}
We first proof the upper bound. For models $f_t$ and $f_{t-1}$, we have
\begin{align*}
 L_\text{dis}(f_t;f_{t-1},\mathcal{D}_{1:t-1})
& = \E_{\substack{c^+ \sim \mu_{1:{t-1}}\\ c^- \sim \mu_{1:{t-1}} } } \E_{\substack{x,x^+ \sim \cD_{c^+} \\ 
x^- \sim \cD_{{c^-}}}}  
[{-}{{\boldsymbol{p}}(f_{t-1};\boldsymbol{x}})\cdot{\log{\boldsymbol{p}}(f_t;\boldsymbol{x})}] \\
& = \sum_{j=1}^{t-1} k_{tj} 
\E_{\substack{c^+ \sim \mu_j\\ c^- \sim \mu_j } } \E_{\substack{x,x^+ \sim \cD_{c^+} \\ 
x^- \sim \cD_{{c^-}}}}
[{-}{{\boldsymbol{p}}(f_{t-1};\boldsymbol{x}})\cdot{\log{\boldsymbol{p}}(f_t;\boldsymbol{x})}] \\
& = \sum_{j=1}^{t-1} k_{tj} L_\text{dis}(f_t;f_{t-1},\mathcal{D}_j).
\end{align*}

Then we can write the training loss as
\begin{align*}
L_\text{train}(f_t;f_{t-1},\mathcal{D}_t, \mathcal{D}_{1:t-1}) 
&=L_\text{con}(f_t;\mathcal{D}_t)+ \lambda L_\text{dis}(f_t;f_{t-1},\mathcal{D}_{1:t-1}) \\
&= L_\text{con}(f_t;\mathcal{D}_t) + \lambda \sum_{j=1}^{t-1} k_{tj} L_\text{dis}(f_t;f_{t-1},\mathcal{D}_j).
\end{align*}
for task $t \ge 2$, and $L_\text{train}(f_1;\mathcal{D}_1)=L_\text{con}(f_1;\mathcal{D}_1)$.
According to the proof of Lemma~\ref{lemma1}, for models $f_t$ and $f_{t-1}$, data distribution $\mathcal{D}_j$ $ (j \le t)$, we have
\begin{align*}
L_\text{con}(f_t,\mathcal{D}_j) \le \alpha L_\text{con}(f_{t-1},\mathcal{D}_j) + L_\text{dis}(f_t;f_{t-1},\mathcal{D}_j) + \beta.
\end{align*}
Denote  $\gamma_{t}(\lambda) = \min\left(\{\frac{1}{t}\}\cup \{\lambda k_{tj}\}_{j=1}^{t-1}\right)$ for task $t \ge 2$. Then we have
\begin{align*}
& L_\text{test}(f_T;\mathcal{D}_{1:T}) \\
& = L_\text{con}(f_T;\mathcal{D}_T) + \sum_{t=1}^{T-1}  L_\text{con}(f_T,\mathcal{D}_t) \\
& \le L_\text{con}(f_T;\mathcal{D}_T) + \sum_{t=1}^{T-1} [L_\text{dis}(f_T;f_{T-1},\mathcal{D}_t) + \alpha L_\text{con}(f_{T-1};\mathcal{D}_t) + \beta] \\
& \le (\frac{1}{\gamma_T(\lambda)} - \frac{1}{\gamma_T(\lambda)} +1)L_\text{con}(f_T;\mathcal{D}_T) + \frac{\lambda}{\gamma_T(\lambda)} \sum_{t=1}^{T-1} k_{Tt} L_\text{dis}(f_T;f_{T-1},\mathcal{D}_t) + \sum_{t=1}^{T-1} [\alpha L_\text{con}(f_{T-1};\mathcal{D}_t) + \beta] \\
& \le \frac{1}{\gamma_T(\lambda)} L_\text{train}(f_T;f_{T-1},\mathcal{D}_T,\mathcal{D}_{1:T-1})+(1- \frac{1}{\gamma_T(\lambda)}) {\min}_f L_\text{con}(f;\mathcal{D}_T) \\
& \qquad + (T-1)\beta + \alpha L_\text{test}(f_{T-1};\mathcal{D}_{1:{T-1}})\\
&\qquad \vdots \\
& \le  \alpha^{T-1} L_\text{train}(f_1;\mathcal{D}_1) +\sum_{t=2}^{T} \frac{\alpha^{T-t}}{\gamma_{t}(\lambda)} L_\text{train}(f_t;f_{t-1},\mathcal{D}_t,\mathcal{D}_{1:t-1})+\eta. 
\end{align*}
where
$\alpha = \frac{2e^2}{1+e^2}$, $\eta =(2-\alpha+\alpha \log \frac{\alpha}{2}) \frac{T - 1 - T \alpha + (\alpha)^T}{(1-\alpha)^2}+\sum_{t=2}^{T} \alpha^{T-t} (1-\frac{1}{\gamma_{t}(\lambda)}) {\min}_f L_\text{con}(f;\mathcal{D}_t)$.

Let us prove the lower bound. According to the proof of Lemma~\ref{lemma1}, for models $f_t$ and $f_{t-1}$, and data distribution $\mathcal{D}_j$ $ (j \le t)$, we have
\begin{align*}
L_\text{con}(f_t,\mathcal{D}_j) \ge \alpha L_\text{con}(f_{t-1},\mathcal{D}_j) + L_\text{dis}(f_t;f_{t-1},\mathcal{D}_j) + \beta' .
\end{align*}
Denote $\gamma_{t}'(\lambda) = \max\left(\{1\}\cup \{\lambda k_{tj}\}_{j=1}^{t-1}\right)$ for task $t \ge 2$. The proof is similar to that of the upper bound.
\begin{align*}
& L_\text{test}(f_T;\mathcal{D}_{1:T}) \\
& = L_\text{con}(f_T;\mathcal{D}_T) + \sum_{t=1}^{T-1}  L_\text{con}(f_T,\mathcal{D}_t) \\
& \ge L_\text{con}(f_T;\mathcal{D}_T) + \sum_{t=1}^{T-1} [L_\text{dis}(f_T;f_{T-1},\mathcal{D}_t) + \alpha L_\text{con}(f_{T-1};\mathcal{D}_t) + \beta'] \\
& \ge \frac{1}{\gamma_t'(\lambda)}[ L_\text{con}(f_T;\mathcal{D}_T) + \lambda \sum_{t=1}^{T-1} k_{Tt} L_\text{dis}(f_T;f_{T-1},\mathcal{D}_t)] + \sum_{t=1}^{T-1} [\alpha L_\text{con}(f_{T-1};\mathcal{D}_t) + \beta'] \\
& = \frac{1}{\gamma_t'(\lambda)} L_\text{train}(f_T;f_{T-1},\mathcal{D}_T,\mathcal{D}_{1:T-1}) + \alpha \sum_{t=1}^{T-1} L_\text{con}(f_{T-1};\mathcal{D}_t) + (T-1)\beta'\\
& =\frac{1}{\gamma_t'(\lambda)} L_\text{train}(f_T;f_{T-1},\mathcal{D}_T,\mathcal{D}_{1:T-1}) + \alpha L_\text{test}(f_{T-1};\mathcal{D}_{1:{T-1}}) + (T-1)\beta' \\
& \ge  \alpha^{T-1} L_\text{train}(f_1;\mathcal{D}_1)+\sum_{t=2}^{T} \frac{\alpha^{T-t}}{\gamma_t'(\lambda)} L_\text{train}(f_t;f_{t-1},\mathcal{D}_t,\mathcal{D}_{1:t-1}) +\eta'.
\end{align*}
where
$\alpha = \frac{2e^2}{1+e^2}$, $\eta' = - (\alpha \log(1+e^2) + \alpha) \frac{T - 1 - T \alpha + (\alpha)^T}{(1-\alpha)^2}$.
\end{proof}

\section{Proof of Theorem~\ref{thm2}} 
\label{proofthm2}
We recall Theorem~\ref{thm2}.
\thmtwo*

\begin{proof}
Note that $U_t>0$, thus $u_t$ exists. If $\lambda_t$ increases by $\Delta \ge 0$, then $\lambda_{t+1} \ge \lambda_t$ and $\gamma_{j}(\lambda_{t+1}) \ge \gamma_{j}(\lambda_{t})$. We have
\begin{align*}
U_{t+1}'= \sum_{j=2}^{t+1} \frac{\alpha^{t+1-j}}{\gamma_{j}(\lambda_{t+1})} L_\text{train}(f_j;f_{j-1},\mathcal{D}_j) \le U_{t+1}=\sum_{j=2}^{t+1} \frac{\alpha^{t+1-j}}{\gamma_{j}(\lambda_{t})} L_\text{train}(f_j;f_{j-1},\mathcal{D}_j) .
\end{align*}
Thus the upper bound becomes tighter.
\end{proof}

\section{The Case of Multiple Negative Examples for Lemma~\ref{lemma1}}
\label{appendixD}

Following our definitions in the paper, the contrastive loss for the case of $k(k\ge 1)$ negative samples can be formulated as
\begin{align*}
L_\text{con}(f;\cD_{t})= \E_{\substack{c^+ \sim \mu_t\\ c_i^- \sim \mu_t } }
\E_{\substack{x,x^+ \sim \cD_{c^+} \\
    x_i^- \sim \cD_{{c_i^-}}}} 
\ell \left[\{\medmath{f(x)^\top (f(x^+)\!-\!f(x_i^-))}\}\right]\!,
\end{align*}
where function $\ell(\boldsymbol{v})$ is defined as $\ell(\boldsymbol{v})=\log(1+\sum_{i=1}^k \exp(-{v_i}))$ for $\boldsymbol{v} \in \mathbb{R}^k$ and embeddings are conventionally normalized, i.e., $\|f\|=1$.
The distillation loss for the case of $k$ negative samples can be formulated as 
\begin{align*}
&L_\text{dis}(f_t;f_{t-1},\cD_{1:t-1})= \\
&\E_{\substack{c^+\!\sim \mu_{1:t-1}\\ c_i^-\!\sim \mu_{1:t-1} } }
\E_{\substack{x,x^+\!\sim \cD_{c^+}\\
    x_i^-\!\sim \cD_{c_i^-}}} 
[- \medmath{{\boldsymbol{p}}(f_{t-1};x,x^+\!,x_1^-,\dots,x_k^-) \cdot\log{\boldsymbol{p}}(f_{t};x,x^+\!,x_1^-,\dots,x_k^-)}],
\end{align*}
where $\mu_{1:t-1}$ represents the class distribution of $\cD_{1:t-1}$ and 
\begin{align*}
{\boldsymbol{p}}(f;x,x^+\!,x_1^-,\dots,x_k^-)=\softmax(f(x)^\top f(x^+), f(x)^\top f(x_1^-), \dots, f(x)^\top f(x_k^-)).  
\end{align*}

Then we provide the extended version of Lemma~\ref{lemma1} and its proof.
\begin{lemma}
\label{lemma1ex}
When $t\ge 2$ and the number of negative samples $k\ge 1$, for any data distribution $\cD$, the contrastive losses of
current model $f_t$ and previous model $f_{t-1}$ can be connected via the distillation loss, i.e.,
\begin{align*}
{L_\text{\rm con}(f_t;\cD) \le \alpha L_\text{\rm con}(f_{t-1};\cD) + L_\text{\rm dis}(f_t;f_{t-1},\cD) + \beta,}\\
{L_\text{\rm con}(f_t;\cD) \ge \alpha L_\text{\rm con}(f_{t-1};\cD) + L_\text{\rm dis}(f_t;f_{t-1},\cD) + \beta',}
\end{align*}
where
$\alpha = \frac{2e^2}{k+e^2}$, 
$\beta = 2-\alpha+\alpha \log \frac{\alpha}{2}$, and
$\beta'  = -\alpha \log(1+ke^2) -\frac{2ke^2}{1+ke^2}$.
\end{lemma}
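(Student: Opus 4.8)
The plan is to mirror the single-negative-sample proof of Lemma~\ref{lemma1} almost verbatim, tracking how the constants change when the softmax is taken over $k+1$ outcomes instead of two. First I would fix a data triple $\boldsymbol{x}=(x,x^+,x_1^-,\dots,x_k^-)$ and write out $-{\boldsymbol{p}}(f_{t-1};\boldsymbol{x})\cdot\log{\boldsymbol{p}}(f_t;\boldsymbol{x})$ explicitly. Setting $v_i(f;\boldsymbol{x})=f(x)^\top(f(x^+)-f(x_i^-))$ and letting $q_0(f;\boldsymbol{x})=\exp(f(x)^\top f(x^+))/Z$ be the positive softmax coordinate (with $Z$ the normalizing sum over the positive and all $k$ negatives), I expect the cross-entropy to decompose, exactly as in the $k=1$ case, into $\ell(\boldsymbol{v}(f_t;\boldsymbol{x}))$ plus a linear correction term of the form $\sum_{i=1}^k (q_i(f_{t-1};\boldsymbol{x}))\,v_i(f_t;\boldsymbol{x})$, where $q_i$ denotes the $i$-th negative softmax coordinate of $f_{t-1}$. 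Taking expectations over $\cD$ then gives $L_\text{\rm dis}(f_t;f_{t-1},\cD)=L_\text{\rm con}(f_t;\cD)+\E[\sum_i q_i(f_{t-1};\boldsymbol{x})v_i(f_t;\boldsymbol{x})]$, isolating the correction term that must be bounded.

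Next I would bound the correction term pointwise using normalization. Since $\|f_t\|=1$, each $v_i(f_t;\boldsymbol{x})\in[-2,2]$, so $\sum_i q_i(f_{t-1};\boldsymbol{x})v_i(f_t;\boldsymbol{x})$ lies between $-2\sum_i q_i(f_{t-1};\boldsymbol{x})$ and $+2\sum_i q_i(f_{t-1};\boldsymbol{x})$. Writing $S=\sum_{i=1}^k q_i(f_{t-1};\boldsymbol{x})=1-q_0(f_{t-1};\boldsymbol{x})$, the quantity I must control is $2S$ against $\alpha\,\ell(\boldsymbol{v}(f_{t-1};\boldsymbol{x}))+\beta$ (upper bound) and against $\alpha\,\ell(\boldsymbol{v}(f_{t-1};\boldsymbol{x}))+\beta'$ (lower bound). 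The analogue of the scalar inequality $\alpha\log(1+e^{-h})+\beta-\frac{2}{1+e^h}\ge 0$ is now a statement about the multivariate $\ell$; the key reduction is that $S=\sum_i e^{-v_i}/(1+\sum_i e^{-v_i})$ and $\ell(\boldsymbol{v})=\log(1+\sum_i e^{-v_i})$, so both quantities depend on $\boldsymbol{v}(f_{t-1};\boldsymbol{x})$ only through the single scalar $W=\sum_{i=1}^k \exp(-v_i(f_{t-1};\boldsymbol{x}))$. With each $v_i\in[-2,2]$ we have $W\in[k e^{-2}, k e^{2}]$, and the problem collapses to a one-variable inequality in $W$.

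The main obstacle will therefore be verifying, for the claimed constants $\alpha=\frac{2e^2}{k+e^2}$, $\beta=2-\alpha+\alpha\log\frac{\alpha}{2}$, and $\beta'=-\alpha\log(1+ke^2)-\frac{2ke^2}{1+ke^2}$, the two scalar inequalities $\alpha\log(1+W)+\beta-\frac{2W}{1+W}\ge 0$ and $\alpha\log(1+W)+\beta'+\frac{2W}{1+W}\le 0$ over the interval $W\in[ke^{-2},ke^{2}]$. For the upper bound I would minimize $g(W)=\alpha\log(1+W)-\frac{2W}{1+W}$; its derivative $g'(W)=\frac{\alpha}{1+W}-\frac{2}{(1+W)^2}$ vanishes at $1+W=\frac{2}{\alpha}$, i.e.\ $W=\frac{2}{\alpha}-1=\frac{k-e^2}{e^2}$, which is an interior critical point, and one checks $\beta$ is exactly $-\min_W g(W)$ so the bound holds with equality at the minimizer. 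For the lower bound, $h(W)=\alpha\log(1+W)+\frac{2W}{1+W}$ is monotincreasing, so its maximum on the interval is at $W=ke^2$, and $\beta'$ is precisely $-h(ke^2)=-\alpha\log(1+ke^2)-\frac{2ke^2}{1+ke^2}$. Substituting $v(f_{t-1};\boldsymbol{x})$ for the generic argument and taking expectations over $\cD$ then converts these pointwise bounds into the stated bounds on $L_\text{\rm con}(f_t;\cD)$; I would verify at $k=1$ that these constants recover those of Lemma~\ref{lemma1} as a consistency check.
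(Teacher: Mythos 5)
Your proposal is correct and follows essentially the same route as the paper's proof: the same decomposition of the cross-entropy into $\ell(\boldsymbol{v}(f_t;\boldsymbol{x}))+\sum_{i=1}^k q_i(f_{t-1};\boldsymbol{x})\,v_i(f_t;\boldsymbol{x})$, the same reduction of both sides to functions of the single scalar $W=\sum_{i=1}^k e^{-v_i(f_{t-1};\boldsymbol{x})}\in[ke^{-2},ke^{2}]$, and the same scalar inequalities with the same constants (the paper merely asserts those inequalities, which you verify by elementary calculus). One small slip that does not affect the conclusion: $2/\alpha-1=k/e^{2}$, not $\frac{k-e^{2}}{e^{2}}$, so the critical point of $g$ is the left endpoint $ke^{-2}$ of the interval rather than an interior point; since the critical value there is still exactly $-\beta$ and it is the global minimum of $g$, your verification that $\beta=-\min_W g(W)$ goes through unchanged.
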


\begin{proof}
For model $f$, denote the data pair $\boldsymbol{x}=(x,x^+,x_1^-,\dots,x_k^-)$ to simplify the proof, we first define 
${v}_i(f;\boldsymbol{x})=f(x)^{\top} (f(x^+) - f(x_i^-))$, and
\begin{align*}
q_i(f;\boldsymbol{x}) =\frac{\exp(-v_i(f;\boldsymbol{x}))}{1+\sum_{i=1}^k \exp(-v_i(f;\boldsymbol{x}))}, \\
q(f;\boldsymbol{x}) =\frac{1}{1+\sum_{i=1}^k \exp(-v_i(f;\boldsymbol{x}))},
\end{align*}
where $q(f;\boldsymbol{x})+\sum_{i=1}^k q_i(f;\boldsymbol{x}) =1$.
For models $f_t$ and $f_{t-1}$, the function $\ell(\boldsymbol{v})=\log(1+\sum_{i=1}^k\exp(-{v_i}))$ for $\boldsymbol{v} \in \mathbb{R}^k$ , and 
\begin{align*}
{\boldsymbol{p}}(f;\boldsymbol{x})=\softmax(f(x)^\top f(x^+), f(x)^\top f(x_1^-), \dots, f(x)^\top f(x_k^-)),    
\end{align*}
we have the following equation
\begin{align*}
& {-}{{\boldsymbol{p}}(f_{t-1};\boldsymbol{x})}\cdot{\log{\boldsymbol{p}}(f_t;\boldsymbol{x})} \notag\\
& =-q(f_{t-1};\boldsymbol{x}) \log(q(f_{t};\boldsymbol{x}))-\sum_{i=1}^k q_i(f_{t-1};\boldsymbol{x}) \log(q_i(f_{t};\boldsymbol{x}))\notag \\
& =q(f_{t-1};\boldsymbol{x}) \log(1+\sum_{i=1}^k \exp(-v_i(f_t;\boldsymbol{x})))- \sum_{i=1}^k q_i(f_{t-1};\boldsymbol{x}) [-v_i(f_t;\boldsymbol{x})-\log(1+\sum_{i=1}^k \exp(-v_i(f_t;\boldsymbol{x}))) ]\notag\\
& =q(f_{t-1};\boldsymbol{x}) \log(1+\sum_{i=1}^k \exp(-v_i(f_t;\boldsymbol{x})))+ \sum_{i=1}^k q_i(f_{t-1};\boldsymbol{x}) [v_i(f_t;\boldsymbol{x})+\log(1+\sum_{i=1}^k \exp(-v_i(f_t;\boldsymbol{x}))) ]\notag\\
& =\log(1+\sum_{i=1}^k \exp(-v_i(f_t;\boldsymbol{x})))+ \sum_{i=1}^k q_i(f_{t-1};\boldsymbol{x}) {v}_i(f_t;\boldsymbol{x}) \notag\\
&=\ell(\boldsymbol{v}(f_t;\boldsymbol{x}))+ \sum_{i=1}^k q_i(f_{t-1};\boldsymbol{x}) {v}_i(f_t;\boldsymbol{x}).
\end{align*}
Then for any data distribution $\mathcal{D}$, we have 
\begin{align*}
&L_\text{dis}(f_t;f_{t-1},\mathcal{D}) \notag\\
&=\E_{\substack{c^+ \sim \mu\\ c_i^- \sim \mu } }
\E_{\substack{x,x^+ \sim \cD_{c^+} \\
    x_i^- \sim \cD_{{c_i^-}}}}  -{{\boldsymbol{p}}(f_{t-1};\boldsymbol{x})}\cdot{\log{\boldsymbol{p}}(f_t;\boldsymbol{x})} \notag\\
&=\E_{\substack{c^+ \sim \mu\\ c_i^- \sim \mu } }
\E_{\substack{x,x^+ \sim \cD_{c^+} \\
    x_i^- \sim \cD_{{c_i^-}}}} \ell(\boldsymbol{v}(f_t;\boldsymbol{x})) +\E_{\substack{c^+ \sim \mu\\ c_i^- \sim \mu } }
\E_{\substack{x,x^+ \sim \cD_{c^+} \\
    x_i^- \sim \cD_{{c_i^-}}}}  \sum_{i=1}^k q_i(f_{t-1};\boldsymbol{x}) v_i(f_t;\boldsymbol{x})
    \notag\\
&= L_\text{con}(f_t;\mathcal{D})+\E_{\substack{c^+ \sim \mu\\ c_i^- \sim \mu } }
\E_{\substack{x,x^+ \sim \cD_{c^+} \\
    x_i^- \sim \cD_{{c_i^-}}}}  \sum_{i=1}^k q_i(f_{t-1};\boldsymbol{x}) v_i(f_t;\boldsymbol{x}). 
\end{align*}
Note that $f_t,f_{t-1}$ are normalized, i.e., 
$-2 \le v_i(f_{t-1};\boldsymbol{x}) \le 2$, and
$-2 \le v_i(f_t;\boldsymbol{x}) \le 2$. We first prove the upper bound. By using the following inequality
\begin{align*}
\alpha \log h + \beta  \ge 2(1-\frac{1}{h}),
\end{align*}
where $\alpha = \frac{2e^2}{k+e^2}$, $\beta = 2-\alpha +\alpha  \log \frac{\alpha}{2}$, $1+ke^{-2} \le h \le 1+ke^{2}$, and using $1+\sum_{i=1}^k \exp(-v_i(f_{t-1};\boldsymbol{x}))$ to replace $h$ in the inequality above, we have  
\begin{align*}
\sum_{i=1}^k q_i(f_{t-1};\boldsymbol{x}) v_i(f_t;\boldsymbol{x}) &  \ge -2 (1-q(f_{t-1};\boldsymbol{x})) \notag\\
&  = -2(1- \frac{1}{1+\sum_{i=1}^k \exp(-v_i(f_{t-1};\boldsymbol{x}))}) \notag\\
&  \ge - \alpha \log(1+\sum_{i=1}^k \exp(-v_i(f_{t-1};\boldsymbol{x}))) - \beta \notag\\
&  = - \alpha \ell(\boldsymbol{v}(f_{t-1};\boldsymbol{x})) - \beta.
\end{align*}
Using the results above, we have 
\begin{align*}
L_\text{dis}(f_t;f_{t-1},\mathcal{D}) 
&=L_\text{con}(f_t;\mathcal{D})+\E_{\substack{c^+ \sim \mu\\ c_i^- \sim \mu } }
\E_{\substack{x,x^+ \sim \cD_{c^+} \\
    x_i^- \sim \cD_{{c_i^-}}}} \sum_{i=1}^k q_i(f_{t-1};\boldsymbol{x}) v_i(f_t;\boldsymbol{x})\notag\\
& \ge L_\text{con}(f_t;\mathcal{D})+\E_{\substack{c^+ \sim \mu\\ c_i^- \sim \mu } } \E_{\substack{x,x^+ \sim \cD_{c^+} \\     x_i^- \sim \cD_{{c_i^-}}}} [ -  \alpha \ell(\boldsymbol{v}(f_{t-1};\boldsymbol{x})) - \beta ]\notag\\
& = L_\text{con}(f_t;\mathcal{D}) - \alpha L_\text{con}(f_{t-1};\mathcal{D}) - \beta.
\end{align*}
which means
\begin{align*}
L_\text{con}(f_t;\mathcal{D}) \le \alpha L_\text{con}(f_{t-1};\mathcal{D}) + L_\text{dis}(f_t;f_{t-1},\mathcal{D}) + \beta.
\end{align*}
This finishes the upper bound part. Then let us consider the lower bound.  We use the inequality
\begin{align*}
-\alpha \log h - \beta'  \ge 2(1-\frac{1}{h}),
\end{align*}
where $\alpha =  \frac{2e^2}{k+e^2}$, $\beta' = -\alpha \log(1+ke^2) -\frac{2ke^2}{1+ke^2}$, $1+ke^{-2} \le h \le 1+ke^{2}$, and using $1+\sum_{i=1}^k \exp(-v_i(f_{t-1};\boldsymbol{x}))$ to replace $h$, then we have  
\begin{align*}
\sum_{i=1}^k q_i(f_{t-1};\boldsymbol{x}) v_i(f_t;\boldsymbol{x})
& \le 2 (1-q(f_{t-1};\boldsymbol{x})) \notag\\
& =  2 (1-\frac{1}{1+\sum_{i=1}^k \exp(-v_i(f_{t-1};\boldsymbol{x}))}) \notag\\
& \le - \alpha \log(1+\sum_{i=1}^k\exp(-v_{i}(f_{t-1};\boldsymbol{x})) - \beta' \notag\\
& = - \alpha \ell(\boldsymbol{v}(f_{t-1};\boldsymbol{x})) - \beta'.
\end{align*}
Using the results above, we have
\begin{align*}
L_\text{dis}(f_t;f_{t-1},\mathcal{D}) 
&=L_\text{con}(f_t;\mathcal{D})+\E_{\substack{c^+ \sim \mu\\ c_i^- \sim \mu } }
\E_{\substack{x,x^+ \sim \cD_{c^+} \\
    x_i^- \sim \cD_{{c_i^-}}}}  \sum_{i=1}^k q_i(f_{t-1};\boldsymbol{x}) v_i(f_t;\boldsymbol{x})\notag\\
& \le L_\text{con}(f_t;\mathcal{D})+\E_{\substack{c^+ \sim \mu\\ c_i^- \sim \mu } } \E_{\substack{x,x^+ \sim \cD_{c^+} \\     x_i^- \sim \cD_{{c_i^-}}}} [ -  \alpha \ell(\boldsymbol{v}(f_{t-1};\boldsymbol{x})) - \beta '] \notag\\
& = L_\text{con}(f_t;\mathcal{D}) - \alpha L_\text{con}(f_{t-1};\mathcal{D}) - \beta'.
\end{align*} 
It can be translated into
\begin{align*}
L_\text{con}(f_t;\mathcal{D}) \ge \alpha L_\text{con}(f_{t-1};\mathcal{D}) + L_\text{dis}(f_t;f_{t-1},\mathcal{D}) + \beta'.
\end{align*}
The proof has finished.
\end{proof}

\section{The Case of Multiple Negative Examples for Theorem~\ref{thm1}}
\label{appendixE}

We provide the extended version of Theorem~\ref{thm1} and its proof.

\begin{theorem}
\label{thm3}
For the contrastive continual learning involving $T\ge 2$ tasks where each task involves $k$ negative samples, the test loss of the final model $f_T$ can be bounded via a linear combination of the training losses associated with each task. More specifically, the following two bounds are applicable.

(1) Upper bound:
\begin{align*}
&L_\text{\rm test}(f_T;\cD_1,\dots,\cD_T) \le \alpha^{T-1} L_\text{\rm train}(f_1;\cD_1) \notag \\ 
&\quad + \sum_{t=2}^{T} \frac{\alpha^{T-t}}{\gamma_{t}(\lambda)}  L_\text{\rm train}(f_t;f_{t-1},\cD_t,\cD_{1:{t-1}}) +\eta , 
\end{align*}

(2) Lower bound:
\begin{align*}
&L_\text{\rm test}(f_T;\cD_1,\dots,\cD_T) \ge \alpha^{T-1} L_\text{\rm train}(f_1;\cD_1) \notag\\
&\quad + \sum_{t=2}^{T} \frac{\alpha^{T-t}}{\gamma_{t}'(\lambda)}  L_\text{\rm train}(f_t;f_{t-1},\cD_t,\cD_{1:{t-1}}) +\eta' , 
\end{align*}
where
\begin{align*}
\begin{cases}
{\alpha = \frac{2e^2}{ k+e^2}} , \\
\gamma_{t}(\lambda) = \min\left(\{\frac{1}{t}\}\cup \{\lambda k_{tj}\}_{j=1}^{t-1}\right), \\
\gamma_{t}'(\lambda) = \max \left(\{1\}\cup \{\lambda k_{tj}\}_{j=1}^{t-1}\right), \\
\eta =(2-\alpha+\alpha \log \frac{\alpha}{2}) \frac{T - 1 - T \alpha + (\alpha)^T}{(1-\alpha)^2}  \\
\qquad + \sum_{t=2}^{T} \alpha^{T-t} (1-\frac{1}{\gamma_{t}(\lambda)}) {\min}_f L_\text{\rm con}(f;\cD_t), \\
{\eta' = -{(\alpha \log(1+ke^2) +\frac{2ke^2}{1+ke^2})} \frac{T - 1 - T \alpha + (\alpha)^T}{(1-\alpha)^2}}.
\end{cases}
\end{align*}
\end{theorem}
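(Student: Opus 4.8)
The plan is to reuse the telescoping argument from the proof of Theorem~\ref{thm1} essentially verbatim, with the single-negative Lemma~\ref{lemma1} replaced by its multiple-negative counterpart Lemma~\ref{lemma1ex}. The crucial observation is that Lemma~\ref{lemma1ex} delivers exactly the same shape of two-sided inequality, $L_\text{con}(f_t;\cD)\le \alpha L_\text{con}(f_{t-1};\cD)+L_\text{dis}(f_t;f_{t-1},\cD)+\beta$ and its reverse with $\beta'$, the only difference being the new constants $\alpha=\frac{2e^2}{k+e^2}$, $\beta=2-\alpha+\alpha\log\frac{\alpha}{2}$ and $\beta'=-\alpha\log(1+ke^2)-\frac{2ke^2}{1+ke^2}$. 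Since the proof of Theorem~\ref{thm1} never exploits the numerical values of these constants, only the structural form of the bound, the nonnegativity of the distillation loss, and the relations $\gamma_t(\lambda)\le 1\le\gamma_t'(\lambda)$, the same chain of inequalities carries over with the substituted constants.

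First I would record the mixture decomposition $L_\text{dis}(f_t;f_{t-1},\cD_{1:t-1})=\sum_{j=1}^{t-1}k_{tj}L_\text{dis}(f_t;f_{t-1},\cD_j)$, which is just linearity of expectation over $\cD_{1:t-1}=\sum_j k_{tj}\cD_j$ and is unaffected by the number of negatives, so that $L_\text{train}(f_t;f_{t-1},\cD_t,\cD_{1:t-1})=L_\text{con}(f_t;\cD_t)+\lambda\sum_{j=1}^{t-1}k_{tj}L_\text{dis}(f_t;f_{t-1},\cD_j)$. Then, writing $L_\text{test}(f_T;\cD_{1:T})=L_\text{con}(f_T;\cD_T)+\sum_{t=1}^{T-1}L_\text{con}(f_T;\cD_t)$ and applying Lemma~\ref{lemma1ex} to each summand (with $\cD=\cD_t$) peels off one layer, producing $\alpha\,L_\text{test}(f_{T-1};\cD_{1:T-1})$ together with $\sum_{t=1}^{T-1}L_\text{dis}(f_T;f_{T-1},\cD_t)+(T-1)\beta$ and the leftover term $L_\text{con}(f_T;\cD_T)$.

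The key regrouping step then bounds $L_\text{con}(f_T;\cD_T)+\sum_{t=1}^{T-1}L_\text{dis}(f_T;f_{T-1},\cD_t)$ by a multiple of $L_\text{train}(f_T;\cdots)$. For the upper bound one uses $\gamma_T(\lambda)\le 1/T\le 1$ together with $\lambda k_{Tt}/\gamma_T(\lambda)\ge 1$ and $L_\text{dis}\ge 0$ to obtain the upper estimate $\frac{1}{\gamma_T(\lambda)}L_\text{train}(f_T;\cdots)+(1-\frac{1}{\gamma_T(\lambda)})\min_f L_\text{con}(f;\cD_T)$, where the $\min_f$ term absorbs the slack from $L_\text{con}(f_T;\cD_T)\ge\min_f L_\text{con}(f;\cD_T)$ (note $1-\frac{1}{\gamma_T(\lambda)}\le 0$). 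Iterating this recursion from $T$ down to the base case $L_\text{test}(f_1;\cD_1)=L_\text{train}(f_1;\cD_1)$ produces the weights $\alpha^{T-t}$ and the two sums, after which collecting the constants via the arithmetic--geometric identity $\beta\sum_{s=1}^{T-1}s\,\alpha^{T-1-s}=\beta\frac{T-1-T\alpha+\alpha^T}{(1-\alpha)^2}$ yields the closed form of $\eta$. The lower bound is entirely symmetric: one invokes the $\beta'$ half of Lemma~\ref{lemma1ex}, replaces $\gamma_T$ by $\gamma_T'(\lambda)=\max(\{1\}\cup\{\lambda k_{Tj}\})\ge 1$ so that $1/\gamma_T'\le 1$ and $\lambda k_{Tt}/\gamma_T'\le 1$, and here the regrouping needs no compensation term, which explains why $\eta'$ carries only the geometric sum of $\beta'$.

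The whole argument is essentially bookkeeping, and there is no genuinely hard step; the only place where $k$ enters is through the modified constants supplied by Lemma~\ref{lemma1ex}. The point I would be most careful about is the regrouping inequality and the sign of $1-\frac{1}{\gamma_t(\lambda)}$: it is precisely the relations $\gamma_t(\lambda)\le 1$ (upper bound) and $\gamma_t'(\lambda)\ge 1$ (lower bound), combined with the nonnegativity of the distillation loss, that make each peeling step valid, and confirming that these hold for every $t\ge 2$ is the one place the indexing could slip. Everything downstream, namely the $\alpha^{T-t}$ weights and the closed-form geometric sums defining $\eta$ and $\eta'$, is then forced and identical in form to the $k=1$ case.
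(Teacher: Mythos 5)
Your proposal is correct and follows essentially the same route as the paper's own proof: the mixture decomposition of the distillation loss, the $\min_f$ compensation trick with the sign of $1-\frac{1}{\gamma_t(\lambda)}$, the recursive peeling via the extended lemma, and the geometric-sum identity for $\eta$, $\eta'$ all match the paper's argument in Appendix~E. The only cosmetic omission is that the lower-bound regrouping also silently uses nonnegativity of the contrastive loss (not just of the distillation loss) to justify $L_\text{con}(f_T;\cD_T)\ge \frac{1}{\gamma_T'(\lambda)}L_\text{con}(f_T;\cD_T)$, which is immediate since $\ell\ge 0$.
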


\begin{proof}
We first proof the upper bound. For models $f_t$ and $f_{t-1}$, we have
\begin{align*}
{ L_\text{dis}(f_t;f_{t-1},\mathcal{D}_{1:t-1})}
& = \E_{\substack{c^+ \sim \mu_{1:{t-1}}\\ c^- \sim \mu_{1:{t-1}} } } \E_{\substack{x,x^+ \sim \cD_{c^+} \\ 
x^- \sim \cD_{{c^-}}}}  
[{-}{{\boldsymbol{p}}(f_{t-1};\boldsymbol{x}})\cdot{\log{\boldsymbol{p}}(f_t;\boldsymbol{x})}] \notag\\
& = \sum_{j=1}^{t-1} k_{tj} 
\E_{\substack{c^+ \sim \mu_j\\ c^- \sim \mu_j } } \E_{\substack{x,x^+ \sim \cD_{c^+} \\ 
x^- \sim \cD_{{c^-}}}}
[{-}{{\boldsymbol{p}}(f_{t-1};\boldsymbol{x}})\cdot{\log{\boldsymbol{p}}(f_t;\boldsymbol{x})}] \notag\\
& = \sum_{j=1}^{t-1} k_{tj} { L_\text{dis}(f_t;f_{t-1},\mathcal{D}_j)}.
\end{align*}

Then we can write the training loss as
\begin{align*}
{ L_\text{train}(f_t;f_{t-1},\mathcal{D}_t, \mathcal{D}_{1:t-1})} 
&=L_\text{con}(f_t;\mathcal{D}_t)+ \lambda { L_\text{dis}(f_t;f_{t-1},\mathcal{D}_{1:t-1})} \notag\\
&= L_\text{con}(f_t;\mathcal{D}_t) + \lambda \sum_{j=1}^{t-1} k_{tj} { L_\text{dis}(f_t;f_{t-1},\mathcal{D}_j)}.
\end{align*}
for task $t \ge 2$, and $L_\text{train}(f_1;\mathcal{D}_1)=L_\text{con}(f_1;\mathcal{D}_1)$.
According to the proof of Lemma~\ref{lemma1}, for models $f_t$ and $f_{t-1}$, data distribution $\mathcal{D}_j$ $ (j \le t)$, we have
\begin{align*}
L_\text{con}(f_t,\mathcal{D}_j) \le \alpha L_\text{con}(f_{t-1},\mathcal{D}_j) + { L_\text{dis}(f_t;f_{t-1},\mathcal{D}_j)} + \beta.
\end{align*}
Denote  $\gamma_{t}(\lambda) = \min\left(\{\frac{1}{t}\}\cup \{\lambda k_{tj}\}_{j=1}^{t-1}\right)$ for task $t \ge 2$. { According to the equations above,} we have
\begin{align*}
& L_\text{test}(f_T;\mathcal{D}_{1:T}) \notag\\
& = L_\text{con}(f_T;\mathcal{D}_T) + \sum_{t=1}^{T-1}  L_\text{con}(f_T,\mathcal{D}_t) \notag\\
& \le L_\text{con}(f_T;\mathcal{D}_T) + \sum_{t=1}^{T-1} [{ L_\text{dis}(f_T;f_{T-1},\mathcal{D}_t)} + \alpha L_\text{con}(f_{T-1};\mathcal{D}_t) + \beta]\notag \\
& \le (\frac{1}{\gamma_T(\lambda)} - \frac{1}{\gamma_T(\lambda)} +1)L_\text{con}(f_T;\mathcal{D}_T) + \frac{\lambda}{\gamma_T(\lambda)} \sum_{t=1}^{T-1} k_{Tt} { L_\text{dis}(f_T;f_{T-1},\mathcal{D}_t)} + \sum_{t=1}^{T-1} [\alpha L_\text{con}(f_{T-1};\mathcal{D}_t) + \beta]\notag \\
& \le  \frac{1}{\gamma_T(\lambda)} { L_\text{train}(f_T;f_{T-1},\mathcal{D}_T,\mathcal{D}_{1:T-1})}+(1- \frac{1}{\gamma_T(\lambda)}) {\min}_f L_\text{con}(f;\mathcal{D}_T) \notag\\
& \qquad + (T-1)\beta + \alpha L_\text{test}(f_{T-1};\mathcal{D}_{1:{T-1}})\notag\\
&\qquad \vdots \notag\\
& \le  \alpha^{T-1} L_\text{train}(f_1;\mathcal{D}_1) +\sum_{t=2}^{T} \frac{\alpha^{T-t}}{\gamma_{t}(\lambda)} { L_\text{train}(f_t;f_{t-1},\mathcal{D}_t,\mathcal{D}_{1:t-1})}+\eta. 
\end{align*}
where
${ \alpha = \frac{2e^2}{k+e^2}}$, $\eta =(2-\alpha+\alpha \log \frac{\alpha}{2}) \frac{T - 1 - T \alpha + (\alpha)^T}{(1-\alpha)^2}+\sum_{t=2}^{T} \alpha^{T-t} (1-\frac{1}{\gamma_{t}(\lambda)}) {\min}_f L_\text{con}(f;\mathcal{D}_t)$.

Let us prove the lower bound. According to the proof of Lemma~\ref{lemma1}, for models $f_t$ and $f_{t-1}$, and data distribution $\mathcal{D}_j$ $ (j \le t)$, we have
\begin{align*}
L_\text{con}(f_t,\mathcal{D}_j) \ge \alpha L_\text{con}(f_{t-1},\mathcal{D}_j) + { L_\text{dis}(f_t;f_{t-1},\mathcal{D}_j)} + \beta' .
\end{align*}
Denote $\gamma_{t}'(\lambda) = \max\left(\{1\}\cup \{\lambda k_{tj}\}_{j=1}^{t-1}\right)$ for task $t \ge 2$. The proof is similar to that of the upper bound. { Similarly, we have}
\begin{align*}
& L_\text{test}(f_T;\mathcal{D}_{1:T})\notag \\
& = L_\text{con}(f_T;\mathcal{D}_T) + \sum_{t=1}^{T-1}  L_\text{con}(f_T,\mathcal{D}_t) \notag\\
& \ge L_\text{con}(f_T;\mathcal{D}_T) + \sum_{t=1}^{T-1} [{ L_\text{dis}(f_T;f_{T-1},\mathcal{D}_t)} + \alpha L_\text{con}(f_{T-1};\mathcal{D}_t) + \beta'] \notag\\
& \ge \frac{1}{\gamma_t'(\lambda)}[ L_\text{con}(f_T;\mathcal{D}_T) + \lambda \sum_{t=1}^{T-1} k_{Tt} { L_\text{dis}(f_T;f_{T-1},\mathcal{D}_t)}] + \sum_{t=1}^{T-1} [\alpha L_\text{con}(f_{T-1};\mathcal{D}_t) + \beta'] \notag\\
& = \frac{1}{\gamma_t'(\lambda)} { L_\text{train}(f_T;f_{T-1},\mathcal{D}_T,\mathcal{D}_{1:T-1})} + \alpha \sum_{t=1}^{T-1} L_\text{con}(f_{T-1};\mathcal{D}_t) + (T-1)\beta'\notag\\
& =\frac{1}{\gamma_t'(\lambda)} { L_\text{train}(f_T;f_{T-1},\mathcal{D}_T,\mathcal{D}_{1:T-1})} + \alpha L_\text{test}(f_{T-1};\mathcal{D}_{1:{T-1}}) + (T-1)\beta' \notag\\
& \ge  \alpha^{T-1} L_\text{train}(f_1;\mathcal{D}_1)+\sum_{t=2}^{T} \frac{\alpha^{T-t}}{\gamma_t'(\lambda)} { L_\text{train}(f_t;f_{t-1},\mathcal{D}_t,\mathcal{D}_{1:t-1})} +\eta'.
\end{align*}
where
${ \alpha = \frac{2e^2}{k+e^2}}$, ${ \eta' = -{(\alpha \log(1+ke^2) +\frac{2ke^2}{1+ke^2})} \frac{T - 1 - T \alpha + (\alpha)^T}{(1-\alpha)^2}}$.
\end{proof}

\end{document}